\providecommand{\bR}{\mathbb R}
\providecommand{\cH}{\mathcal H}
\DeclareMathOperator*{\argmax}{\arg\!\max}
\DeclareMathOperator*{\argmin}{\arg\!\min}
\DeclareMathOperator{\dist}{{dist}}
\DeclareMathOperator{\relu}{{relu}}
\newcommand{\push}{{\texttt{push}}}
\newcommand{\pop}{\texttt{pop}}
\newcommand{\priority}{\texttt{priority}}
\DeclareMathOperator{\Face}{{Face}}
\DeclareMathOperator{\parent}{{parent}}
\DeclareMathOperator{\class}{{c}}
\newcommand{\decision}{\texttt{decision\_bound}}
\newcommand{\reachable}{\texttt{contains\_db}}
\newcommand{\nextlayer}{\texttt{next\_layer}}
\newcommand{\restrict}{\texttt{restriction}}
\newtheorem{theorem}{Theorem}[section]
\newtheorem{definition}[theorem]{Definition}
\newtheorem{lemma}[theorem]{Lemma}
\newtheorem{proposition}[theorem]{Proposition}
\newcommand{\dontusepackage}[2][]{%
  \@namedef{ver@#2.sty}{9999/12/31}%
  \@namedef{opt@#2.sty}{#1}}
\newcommand{\ALGtikzmarkcolor}{black}
\newcommand{\ALGtikzmarkextraindent}{4pt}
\newcommand{\ALGtikzmarkverticaloffsetstart}{-.5ex}
\newcommand{\ALGtikzmarkverticaloffsetend}{-.5ex}
\newcounter{ALG@tikzmark@tempcnta}
\newcommand\ALG@tikzmark@start{%
    \global\let\ALG@tikzmark@last\ALG@tikzmark@starttext%
    \expandafter\edef\csname ALG@tikzmark@\theALG@nested\endcsname{\theALG@tikzmark@tempcnta}%
    \tikzmark{ALG@tikzmark@start@\csname ALG@tikzmark@\theALG@nested\endcsname}%
    \addtocounter{ALG@tikzmark@tempcnta}{1}%
}
\def\ALG@tikzmark@starttext{start}
\newcommand\ALG@tikzmark@end{%
    \ifx\ALG@tikzmark@last\ALG@tikzmark@starttext
    \else
        \tikzmark{ALG@tikzmark@end@\csname ALG@tikzmark@\theALG@nested\endcsname}%
        \tikz[overlay,remember picture] \draw[\ALGtikzmarkcolor] let \p{S}=($(pic cs:ALG@tikzmark@start@\csname ALG@tikzmark@\theALG@nested\endcsname)+(\ALGtikzmarkextraindent,\ALGtikzmarkverticaloffsetstart)$), \p{E}=($(pic cs:ALG@tikzmark@end@\csname ALG@tikzmark@\theALG@nested\endcsname)+(\ALGtikzmarkextraindent,\ALGtikzmarkverticaloffsetend)$) in (\x{S},\y{S})--(\x{S},\y{E});%
    \fi
    \gdef\ALG@tikzmark@last{end}%
}
\apptocmd{\ALG@beginblock}{\ALG@tikzmark@start}{}{\errmessage{failed to patch}}
\pretocmd{\ALG@endblock}{\ALG@tikzmark@end}{}{\errmessage{failed to patch}}
\icmltitlerunning{Hierarchical Verification for Adversarial Robustness}
\begin{document}

\twocolumn[
\icmltitle{Hierarchical Verification for Adversarial Robustness}




\begin{icmlauthorlist}
\icmlauthor{Cong Han Lim}{atg}
\icmlauthor{Raquel Urtasun}{atg,tor}
\icmlauthor{Ersin Yumer}{atg}
\end{icmlauthorlist}

\icmlaffiliation{atg}{Uber Advanced Technologies Group, Toronto ON, Canada}
\icmlaffiliation{tor}{Department of Computer Science, University of Toronto, Toronto ON, Canada}

\icmlcorrespondingauthor{Cong Han Lim}{conghan@uber.com}

\icmlkeywords{Machine Learning, ICML}

\vskip 0.3in
]



\printAffiliationsAndNotice{Work done as part of first author's AI Residency Program.}  


\begin{abstract}

We introduce a new framework for the exact point-wise $\ell_p$ robustness verification problem that exploits the layer-wise geometric structure of deep feed-forward networks with rectified linear activations (ReLU networks). 
The activation regions of the network partition the input space, and one can verify the $\ell_p$ robustness around a point by checking all the activation regions within the desired radius. 
The GeoCert algorithm \citep{jordan_provable_2019} treats this partition as a generic polyhedral complex in order to detect which region to check next. 
In contrast, our LayerCert framework considers the \emph{nested hyperplane arrangement} structure induced by the layers of the ReLU network and explores regions in a \emph{hierarchical} manner. 
We show that, under certain conditions on the algorithm parameters, LayerCert \emph{provably} reduces the number and size of the convex programs that one needs to solve compared to GeoCert. 
Furthermore, our LayerCert framework allows  the incorporation of lower bounding routines based on convex relaxations to further improve performance. 
Experimental results demonstrate that LayerCert can significantly reduce both the number of convex programs solved and the running time over the state-of-the-art.
\end{abstract}


\section{Introduction} \label{sec:intro}

Deep neural networks have been demonstrated to be susceptible to adversarial perturbations of the inputs (e.g., \citet{szegedy2013intriguing,biggio2013evasion,goodfellow2014explaining}). 
Hence, it is important to be able to measure how vulnerable a neural network may be to such noise, especially for safety-critical applications.
We study the problem of pointwise \emph{exact} verification for $\ell_p$-norm adversarial robustness for trained deep feed-forward networks with ReLU activation functions. The point-wise $\ell_p$ robustness with respect to an input $x \in \bR^n$ and a classifier $\class:\bR^n \rightarrow [C] \coloneqq \{1,2,3,\dotsc,C\}$ is defined as 
\begin{align}\label{eq:robustness}
\epsilon^*(x;c) \coloneqq \min_{\|v\|_p \leq \epsilon} \epsilon \text{ s.t.\ } \class(x + v) \neq \class(x). 
\end{align}
The goal of  exact or complete robustness verification is to check if $\epsilon > r$ for some desired radius $r$. 
The choices of $p$ studied in the literature are typically $1,2,$ and $\infty$; our work applies to all $p \geq 1$. 
Solving Problem~\eqref{eq:robustness} exactly (or within a factor of $1 - o(1) \ln n$) is known to be NP-hard \citep{weng_towards_2018}. 
Developing methods that perform well in practice would require a better understanding of the mathematical structure of neural networks. 

In this work, we approach the problem from the angle of how to directly exploit the geometry induced in input space by the neural network. 
Each activation pattern (i.e., whether each neuron is on or off) corresponds to a polyhedral region in the input space, and the decision boundary within each region is \emph{linear}.
A natural geometric approach to the verification problem is then to check regions in order of their distance. 
We illustrate this in Figure~\ref{figs:geomethods}.
We can terminate this process either when we have reached the desired verification radius or when we have exceeded the distance to the closest decision boundary found. 
In the latter case the distance to that boundary is the solution to Problem \eqref{eq:robustness}. 

\citet{jordan_provable_2019} proposed the first algorithm for this distance-based exploration of the regions.
Their GeoCert algorithm navigates the regions in order of distance by maintaining a priority queue containing all the polyhedral faces that make up the frontier of all regions that have been visited. 
The priority associated with each face is computed via an optimization problem that can be solved by a generic convex programming solver. 
Under a limited time budget, GeoCert finds a stronger computational lower bound for $\epsilon^*(x;c)$ 
compared to a complete method that directly uses mixed-integer programming \citep{tjeng_evaluating_2019}.

{
\begin{figure}
\begin{center}
\includegraphics[width=0.98\linewidth]{./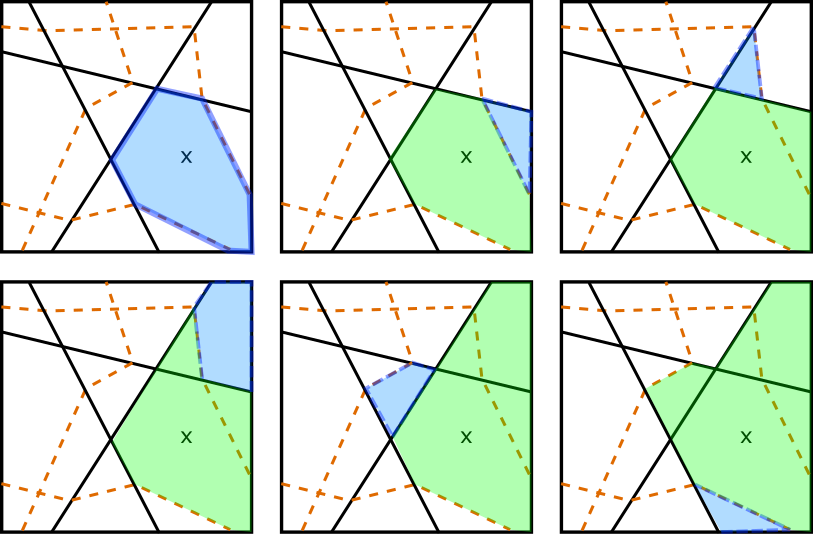}
\vspace{-5pt}
\caption{How geometric methods explore the input space. Each square illustrates one step in the process of exploring regions of increasing $\ell_2$ distance from the initial point $x$.
The box represents the input space to a ReLU network, the inner black lines the regions induced by three first layer ReLUs, and brown lines the regions by another three ReLUs in the second layer. 
The blue regions are being processed during that step, while the green regions have already been processed.\label{figs:geomethods}}
\end{center}
\end{figure}
}

In this paper we introduce the \emph{LayerCert} framework that makes use of the \emph{layer-wise} structure of  neural networks. 
The first layer of ReLUs induce a hyperplane arrangement structure, and each subsequent layer induces one within each region of the hyperplane arrangement from the previous layer. 
This forms a nested hyperplane pattern and a hierarchy of regions/subregions that our algorithm will use to navigate the input space.
This hierarchical approach has two main advantages over GeoCert:
\begin{enumerate}
  \item \emph{Provable} reduction in the number and size of convex programs solved when using a convex distance-based priority function.
  \item The ability to incorporate convex-relaxation-based \emph{lower bounding/incomplete methods} for Problem~\eqref{eq:robustness} to reduce the number of regions that are processed.
\end{enumerate}

We demonstrate the first advantage by studying a simplified version of LayerCert (LayerCert-Basic), which introduces our hierarchical approach to navigating the regions but does not include the use of additional subroutines to prune the search space.
By making use of the nested hyperplane structure, LayerCert \emph{provably} reduces the number of convex programs and the sizes of each program that need to be solved compared to GeoCert when using the same convex distance priority function. 
This is done by identifying a minimal set of programs that are required and by amortizing the work associated with a single region in GeoCert across multiple levels of the hierarchy in LayerCert. 

The second advantage comes from the fact that each region $R$ in the $i$-th level of the hierarchy is associated with a set of children regions in the $(i+1)$-th level that is contained entirely within $R$. 
This allows us to use incomplete verifiers over just this region to determine if the region might intersect with a decision boundary. 
If the verifier returns that no such overlap exists, we can then safely remove the region and all its children from further consideration.
One straightforward way to do this is to use efficient reachability methods such as interval arithmetic \citep{xiang_output_2018}. In this work we also develop a novel method that leverages \emph{linear lower bounds} on the neural network function \citep{weng_towards_2018,zhang_efficient_2018} to construct a half-space that is guaranteed not to contain any part of the decision boundary.
This can restrict the search space significantly.
Furthermore, we can warm start LayerCert by projecting the input point onto the half-space.

Using the experimental setup of \citet{jordan_provable_2019}, we  compare the number of programs solved and overall wall-clock time of different variants of GeoCert and LayerCert. 
Each LayerCert method uses a different combination of lower-bounding methods. 
For GeoCert, we consider different choices of priority functions.
In addition to the standard $\ell_p$ distance priority function, \citet{jordan_provable_2019} describes a non-convex priority function that incorporates a Lipschitz term. 
This Lipschitz variant modifies the order in which regions are processed and also provides an alternative warm-start procedure. 
Our LayerCert variants consistently outperform GeoCert using just the $\ell_p$ distance priority function and in most experiments   our lower-bounding techniques outperform the Lipschitz variant of GeoCert. 




\paragraph{Notation.}
We use $[k]$ to denote the index set $\{1,2,\dotsc,k\}$.
Superscripts are used to index distinct objects, with the exception of $\bR^n$ and $\bR^+$ to denote the $n$-dimensional Euclidean space and the nonnegative real numbers respectively. 
We use subscripts for vectors and matrices to refer to entries in the objects and subscripts for sets to refer to distinct sets. We use \texttt{typewriter\_fonts} to denote subroutines. 



\section{Related work} \label{sec:related}
Besides GeoCert \citep{jordan_provable_2019}, the majority of exact or complete verification methods are based on branch-and-bound (e.g. \citet{katz_reluplex:_2017,wang_formal_2018,tjeng_evaluating_2019,anderson_optimization_2019,lu_neural_2020}), and \citet{bunel_unified_2018,bunel_branch_2019} provide a detailed overview. 
These methods construct a search tree over the possible individual ReLU activations and use upper and lower bounds to prune the tree. The upper bounds come from adversarial examples, while the lower bounds are obtained by solving a relaxation of the original problem, which we briefly discuss in the next paragraph. 
Since our focus in this work is on methods that directly leverage the geometry of the neural network function in the input space, we leave a detailed comparison against these methods to future work.

Instead of exactly measuring or verifying, we can instead overapproximate or relax the set reachable by an $\epsilon$-ball around the input point (e.g. \citet{dvijotham_dual_2018,xiang_output_2018,singh_fast_2018,weng_towards_2018,wong_provable_2018,zhang_efficient_2018,singh_abstract_2019}). 
These \emph{incomplete} approaches give us a \emph{lower bound} for Problem~\eqref{eq:robustness}. 
These can only certify that there is no perturbation that changes the class within some radius $r$ for some $r$ that can be much smaller than $\epsilon^*(x;c)$ 
The majority of the works focus on different convex relaxations of the ReLU activations (see \citet{salman_convex_2019} for a discussion of the tightness of the different approaches), though recent works have started going beyond single ReLU relaxations \citep{singh_beyond_2019,anderson_strong_2019}.
Another line of work studies how to efficiently bound the Lipschitz constant of the neural network function \citep{szegedy2013intriguing,bartlett2017spectrally,balan2017lipschitz,weng_towards_2018,combettes2019lipschitz,fazlyab_efficient_2019,zou2019lipschitz}.

The complexity of geometric methods (and many other exact methods) for robustness verification can be upper bounded by a constant $\times$ (number of activation regions) $\times$ (complexity of solving a convex program). 
There have been several works in recent years that study the number of these activation regions in ReLU networks. 
The current tightest upper and lower bounds for the maximum number of nonempty activation regions, which are exponential in the number of ReLU neurons, is given by \citet{serra_bounding_2018}. 
\citet{hanin_deep_2019} provide an upper bound on the expected number of regions (exponential in the lesser of the input dimension and the number of ReLU neurons).


\section{Hierarchical Structure of ReLU Networks} \label{sec:structure}

In this section, we describe the structure induced by the ReLU neurons in the input space and how this leads naturally to a hierarchy of regions that we use in our approach.

\subsection{Hyperplane Arrangements}

\begin{definition} (Hyperplanes and hyperplane arrangements)
A hyperplane $H \subset \bR^n$ is an $(n-1)$-dimensional affine space that can be written as $\{x \:|\: a^\intercal x = b \}$ for some $a \in \bR^n$ and $b \in \bR$. A hyperplane arrangement $\cH$ is a finite set of hyperplanes.
\end{definition}

\begin{definition} (Halfspaces)
Given a hyperplane $H \coloneqq \{x \:|\: a^\intercal x = b \}$, the halfspaces $H^\leq$ and $H^\geq$ correspond to the sets $\{x \:|\: a^\intercal x \leq b \}$ and $\{x \:|\: a^\intercal x \geq b \}$, respectively. 
A polyhedron is a finite intersection of halfspaces.
\end{definition}

\begin{definition} (Patterns and regions in hyperplane arrangements)
Given a hyperplane arrangement $\cH$ and a pattern $P: \cH \rightarrow \{ -1,1 \}$, the corresponding region is $$R_+ \coloneqq \bigcap_{P(H) = -1} H^{\leq} \cap \bigcap_{P(H) = 1} H^{\geq}.$$ We say that another pattern $Q: \cH \rightarrow \{ -1,1 \}$ (or region $R_Q$) is a neighbor of $P$ ($R_P$ resp.) if $P$ and $Q$ differ only on a single hyperplane.
\end{definition}


\subsection{Geometric Structure of Deep ReLU Networks} 
\label{subsec:deep_geo}

A 
ReLU network with $L$ hidden layers for classification with $C$ classes and $n_i$ neurons in the $i$-th layer for $i \in [L]$ can be represented as follows:
\begin{align}
  \notag x^0 &\coloneqq x & &\text{input},\\
  z^{i+1} &\coloneqq W^i x^i + b^i & &\text{for } i \in \{0, 1,\dotsc, L-1 \}, \label{eq:z} \\
  \notag x^{i} &\coloneqq \relu(z^i) & &\text{for } i \in \{1,\dotsc, L \}, \\
  \notag f(x) &\coloneqq W^L x^L + b^L & &\text{output}, 
\end{align}
where $W^i,b^i$ describe the weight matrix and bias in the $i$-th layer and the ReLU function is defined as $\relu(v))_i \coloneqq \max(0,v_i)$. 
The length of $b^i$ is $n^{i+1}$ for $i \in [L-1]$ and $b^0 \in \bR^n$ and $b^L \in \bR^C$. 
The classification decision is given by $\argmax_i (f(x))_i$, and the decision boundary between classes $i$ and $j$ is the set $\{ x \:|\: f_i(x) - f_j(x) = 0 \}$.

Note that layers like batch normalization layers, convolutional layers, and average pooling layers can be included in this framework by using the appropriate weights and biases. 
We can also have a final softmax layer since it does not affect the decision boundaries as it is a symmetric monotonically increasing function. 

\begin{definition} \label{def:neighbors} (Full activation patterns)
Let $n_i$ denote the number of neurons in the $i$-th layer.
A (full) activation pattern $A = (A_1,\dotsc,A_L)$ is a collection of functions $A_i: [n_i] \rightarrow \{-1,+1\}$. Two activation patterns are neighbors if they differ on exactly one layer for exactly one neuron.
\end{definition}

We can define a \emph{neighborhood graph} over the set of activation patterns where each node presents a pattern and we add an edge between neighboring patterns.

\begin{definition} \label{def:full_activation} (Activation patterns and the input space)
For an input $x$, the (full) activation pattern of $x$ is $A^x$ where
\begin{align}
A^x_i(j) \coloneqq 
\begin{cases}
+1 &\text{if } z^i_j \geq 0, \\
-1 &\text{if } z^i_j < 0,
\end{cases}
\end{align}
where the $z^i_j$ terms are defined according to \eqref{eq:z}. Conversely, given an activation pattern $A$, the corresponding \emph{activation region} is
$R_A \coloneqq \{ x \:|\: A^x = A \}.$
\end{definition}

Given an activation pattern $A$ and some $x \in R_A$, the corresponding $z$ terms are given by
\begin{align} \label{eq:z2}
z^{i+1} = W^i I^{A_i} z^i + b^i
\end{align} 
where $I^{A_i}$ is the diagonal 0-1 matrix such that
\begin{align*}
I^{A_i}_{j,k} 
\coloneqq 
\begin{cases}
1 &\text{if } j = k \text{ and } A_i(j) = 1, \\
0 &\text{otherwise.}
\end{cases}
\end{align*}
By letting 
\begin{align*}
c^i \coloneqq \sum_{j=0}^i \left( \left( \prod_{k=j+1}^i W^k I^{A_k} \right) b^j \right)
\end{align*}
we can expand  
Eq. \eqref{eq:z2} as
\begin{align} \label{eq:z3}
z^{i+1} = \left(\prod_{j=0}^i W^j I^{A_j}\right) x + c^i.
\end{align} 
Hence, each $z^i$ term and $f(x)$ can be expressed as a linear expression over $x$ involving $W^i$, $I^{A_i}$, and $b^i$ terms.
This also allows us to write $R_A$ in the form of linear inequalities over $x$ as a polyhedron 
\begin{align} \label{eq:region_poly}
\{ x \:|\: A_i(j) z^i_j \geq 0 \text{ for } i \in [L], j \in [n_i] \}
\end{align}



Since the neural network function $f$ is linear within each activation region, each decision boundary is also linear within each activation region. This allows us to efficiently compute the classification decision boundaries.

The set of activation regions for a network with one hidden layer corresponds to the regions of a hyperplane arrangement (where each row of $W^0$ and the corresponding entry in $b^0$ defines a hyperplane). 
With each additional layer, we take the regions corresponding to the previous layer and add a hyperplane arrangement to each region. 
Thus, this leads to a \emph{nested hyperplane arrangement}. Figure~\ref{fig:nested} illustrates this structure and the corresponding neighborhood graph. 

\begin{figure}
\centering
\includegraphics[width=0.98\linewidth]{./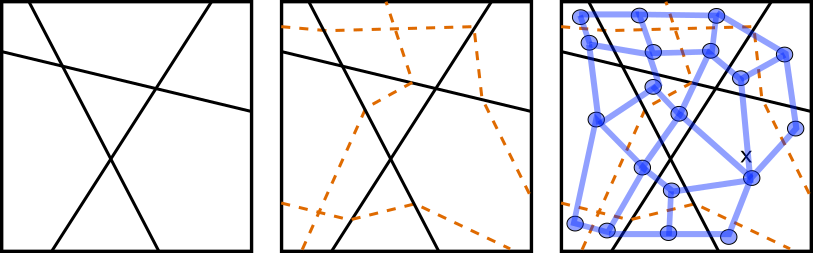}
\caption{Left: the activation regions of a ReLU network with one hidden layer forms a hyperplane arrangement. Middle: the activation regions of a ReLU network with two hidden layers. Note that within the regions defined by the previous layer, the lines induce a hyperplane arrangement.
Right: The neighborhood graph of the regions.
\label{fig:nested}} 
\end{figure} 

In addition to full activation patterns, it is useful to consider the patterns for all neurons up to a particular layer.

\begin{definition} (Partial activation patterns and regions) 
Given some $l < L$, an $l$-layer partial activation pattern $A = (A_1,\dotsc,A_l)$ is a collection of functions $A_i: [n_i] \rightarrow \{-1,+1\}$. 
The corresponding partial activation region $R_A$ is $\{ x \:|\: A_i(j) z^i_j \geq 0 \text{ for } i \in [L], j \in [n_i] \}$.
\end{definition}

The partial activation regions naturally induce a hierarchy of regions. We can describe the relationship between the regions in the different levels in the following terms:

\begin{definition} \label{def:parent}
For a $l$-layer activation pattern $A=(A_1,\dotsc,A_l)$, let $\parent(A) \coloneqq (A_1,\dotsc,A_{l-1})$.
The terms \emph{child} and \emph{descendant} are defined analogously. 
\end{definition}

\begin{definition} \label{def:partial_neighbors}
Two $l$-layer partial activation patterns are siblings if they share the same parent pattern.
They are neighboring siblings if they differ on exactly one neuron in the $l$-th layer and agree everywhere else.
\end{definition}

We can use Defintion~\ref{def:parent} and \ref{def:partial_neighbors} to define a \emph{hierarchical search graph} with $L+1$ levels. 
The nodes in the $l$-th level represent the $l$-layer activation patterns. 
We connect two activation patterns in the same level if they are neighboring siblings.
We connect a pattern $A$ to its parent $\parent(A)$ if $R_A$ is the region closest to the input point $x$ out of all its siblings.
We introduce a single node in the level $0$ and connect it to the first-layer activation pattern that contains $x$.
Figure~\ref{fig:hier} illustrates this hierarchy of activation regions and the corresponding hierarchical search graph. 
In Sections~\ref{sec:geo} and \ref{sec:full}, we describe how to leverage this hierarchical structure to design efficient algorithms for verification.

\begin{figure}
\centering
\includegraphics[width=0.8\linewidth]{./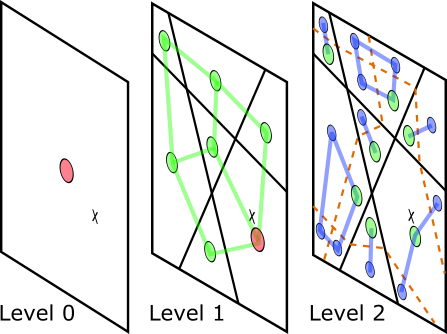}
\caption{The hierarchical structure induced by the partial activation regions. Each partial region is marked by a circle node. The nodes of the same color represent the closest subregion within a parent region to the input point $x$. For example, the upper left most green nodes in levels 1 and 2 are connected. \label{fig:hier}} 
\end{figure} 


\section{Exploring Activation Regions Geometrically} \label{sec:geo}

In this section, we first describe the GeoCert algorithm \citep{jordan_provable_2019} that provides a method for navigating the activation regions in order of increasing distance from the input. 
We subsequently consider the hierarchy of partial regions and describe LayerCert-Basic that leverages the geometric structure to provably reduce the number and size of convex programs compared to GeoCert. 
We then introduce the full LayerCert framework in Section~\ref{sec:full}.

\subsection{Prior Work: GeoCert}

GeoCert performs a process akin to breadth-first search over the neighbourhood graph (see Figure~\ref{fig:nested}). 
In each iteration, GeoCert selects the activation region corresponding to the closest unexplored node neighbouring an explored node and then computes the distance to all regions neighbouring  the selected region.

We provide a formal description of GeoCert in Algorithm~\ref{alg:geocert} and demonstrate an iteration of the algorithm in Figure~\ref{fig:geocert}.
Setting the input $U$ term to a radius $r$ solves the robustness verification problem for that radius, while setting it sufficiently high measures the robustness (i.e., Problem~\eqref{eq:robustness}).

We describe the subroutines in detail below.

\begin{algorithm}[t]
   \caption{GeoCert}
   \label{alg:geocert}
\begin{algorithmic}[1]
  \STATE {\bfseries Input:} $x$, $y$ (label of $x$), $U$ (upper bound)
  \STATE $A^x \gets $ activation pattern of $x$
  \STATE $Q \gets $ empty priority queue
  \STATE $Q.\push((0,A^x))$
  \STATE $S \gets \emptyset$
  \WHILE{$Q \neq \emptyset$}
    \STATE $(d, A) \gets Q.\pop()$
    \IF{$A \in S$}
      \STATE{\textbf{continue}}
    \ENDIF
    \STATE $S \gets S \cup \{ A' \}$
    \IF{$U \leq d$}
      \STATE \textbf{return} $U$
    \ENDIF
    \STATE $U \gets \min(\decision(A, x, y), U)$
    \FOR{$A' \in N(A)\setminus S$}
      \IF{$\Face(A,A')$ is nonempty}
        \STATE $d' \gets \priority(x, \Face(A,A'))$ 
        \STATE $Q.\push((d',A'))$
      \ENDIF
    \ENDFOR
  \ENDWHILE
\end{algorithmic}
\end{algorithm}

\paragraph{Measuring the distance to a distance boundary restricted to a region.}
The subroutine $\decision$ with inputs $A,x,y$ solves the problem
\begin{align} \label{eq:decision}
\min_{v} \| x - v \|_p \quad\text{ s.t.\ }\quad v \in R_A,f_y(v) - f_j(v) = 0
\end{align}
for all classes $j \neq y$ and returns the minimum (or $\infty$ if all the problems are infeasible). 
This is equivalent to computing the $\ell_p$ projection of $x$ onto the respective set. We can use algorithms specifically designed for projection onto sets such as Dykstra's method \citep{boyle_method_1986}. We can also use generic convex optimization solvers to handle a wider range of priority functions. 

\paragraph{Computing the priority function.}
From \eqref{eq:region_poly}, we can write each activation region $R_A$ as a polyhedron $\{ x \:|\: A_i(j) z^i_j \geq 0 \text{ for } i \in [L], j \in [n_i] \}$. 
A neighbouring region $R_A'$ that differs only on the $a$-th neuron on the $b$-th layer will intersect with $R_A$ within the hyperplane $H = \{ x \:|\: z^b_a \geq 0\}$. We name the set $R_A \cap H = R_A' \cap H$ as $\Face(A,A')$ since this set is a face of both $A$ and $A'$. As with $R_A$ and \eqref{eq:region_poly}, we can write $\Face(A,A')$ as
\begin{align} \label{eq:rface_poly}
\{ x \:|\: A_i(j) z^i_j \geq 0 \text{ for } i \in [L], j \in [n_i], z^a_b = 0  \}
\end{align}
where $z^i_j$ can be expressed in terms of the $x$ variables using the expression in \eqref{eq:z3}.
Given some function $q:\bR^n \times \bR^n \rightarrow \bR^+$, the subroutine $\priority$ with inputs $x,\Face(A,A')$ solves the following optimization problem:
\begin{align} \label{eq:priority}
\min_{v} q(x,v) \quad\text{ s.t.\ }\quad v\in \Face(A,A').
\end{align}
For $\ell_p$-robustness, a natural choice of $q$ is the $\ell_p$ distance function. \citet{jordan_provable_2019} also propose an alternative variant of GeoCert where they incorporate an additional term $\min_{y\neq j}\frac{f_y(v) - f_j(v)}{L}$ priority function, where $L$ denotes an upper bound on the Lipschitz constant. This makes the priority function nonconvex. We will refer to the $\ell_p$ priority variant of GeoCert as just GeoCert and specifically use the term GeoCert-Lip for the Lipschitz variant.

\begin{figure}[t]
\centering
\includegraphics[width=0.98\linewidth]{./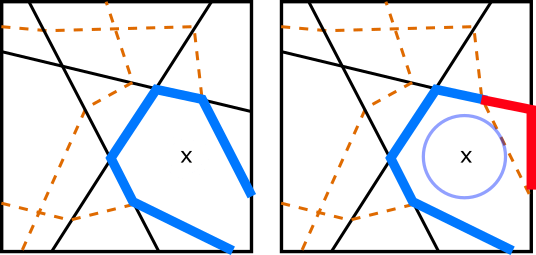}
\caption{One iteration of GeoCert \label{fig:geocert}. The ball represents a level set of the priority function and the blue line the set of boundary faces of the explored regions. After popping the nearest unexplored activation region off a priority queue, GeoCert computes the distance of previously unseen faces in that region (marked in red).} 
\end{figure}


\subsection{Our Approach -- LayerCert} \label{subsec:layercert-basic}

Instead of exploring the neighborhood graph of full activation regions, we develop an algorithm that makes use of the nested hyperplane arrangement and the graph induced by it. See Definition~\ref{def:parent} and Figure~\ref{fig:hier} for a description of this graph.
We first give a description of a basic form of LayerCert in Algorithm~\ref{alg:layercert-basic}, followed by theoretical results about this method.


In each iteration, LayerCert processes the next nearest \emph{partial activation pattern} by computing the distances to all \emph{sibling patterns} to the queue.
This is in contrast to GeoCert which computes the distance to all neighbouring patterns of the next nearest \emph{full} activation pattern.


The new subroutine in LayerCert-Basic is $\nextlayer$, which takes an $l$-layer activation pattern $A$ and an input vector $v$ and returns the $l+1$ layer activation pattern that is a child of $A$ and contains $v$:
\begin{align*}
\nextlayer(A,v) 
\coloneqq &(A_1,\dotsc,A_l,A^v_{l+1})
\end{align*}
where $A^v$ is the full activation pattern of $v$ (Definition~\ref{def:full_activation}).


We prove in the appendix that when the priority function is a convex distance function, LayerCert-Basic visits full activation patterns in ascending order of distance, which implies that it returns the correct solution.

\begin{algorithm}[t]
   \caption{LayerCert-Basic}
   \label{alg:layercert-basic}
\begin{algorithmic}[1]
  \STATE {\bfseries Input:} $x$, $y$ (label of $x$), $U$ (upper bound)
  \STATE $A^x \gets $ activation pattern of $x$ at first layer
  \STATE $Q \gets $ empty priority queue
  \STATE $Q.\push((0,A^x,x))$
  \STATE $S \gets \emptyset$
  \WHILE{$Q \neq \emptyset$}
    \STATE $(d, A, v) \gets Q.\pop()$
    \IF{$U \leq d$}
      \STATE \textbf{return} $U$
    \ENDIF
    \IF{$A$ is a full activation pattern}
      \STATE $U \gets \min(\decision(A, x, y),U)$
    \ELSE
      \STATE $Q.\push(d,\nextlayer(A,v),v)$
    \ENDIF
    \FOR{$A' \in N_{\text{current\_layer}}(A)\setminus S$}
      \IF{$\Face(A,A')$ is nonempty}
        \STATE $v',d' \gets \priority(x, \Face(A,A'))$ 
        \STATE $Q.\push((d',A',v'))$
        \STATE $S \gets S \cup \{ A' \}$
      \ENDIF
    \ENDFOR
  \ENDWHILE
\end{algorithmic}
\end{algorithm}

\begin{theorem}\label{thm:correctness} (Correctness of LayerCert-Basic)
If the priority function used is a convex distance function, LayerCert processes full activation patterns in ascending order of distance and returns the distance of the closest point with a different class from $x$.
\end{theorem}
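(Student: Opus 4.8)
The plan is to prove the two halves of the statement in sequence: \textbf{(i)} that LayerCert-Basic pops items from its priority queue in non-decreasing order of $\dist(x,R_A)$, so that full activation patterns are visited in ascending distance; and \textbf{(ii)} that (i) forces the returned value to equal the distance $\epsilon^*$ to the nearest point of a different class (when the algorithm's input bound $U$ is initialized sufficiently large, with $U_0$ denoting that initial value). Write $d_A \coloneqq \dist(x,R_A) = \min_{v\in R_A} q(x,v)$ for any partial or full pattern $A$. The two structural facts I will use are monotonicity under refinement, $R_B \subseteq R_A \Rightarrow d_A \le d_B$ (so $d_{\parent(A)}\le d_A$, and the closest child of $A$ has the same distance as $A$), and convexity of $q$: for any $p,p'$ the map $t\mapsto q(x,(1-t)p+tp')$ is convex.

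For (i), I would prove by induction on the order of pops the invariant that every popped triple $(d,A,v)$ satisfies $v\in R_A$ and $q(x,v)=d=d_A$, and that popped priorities are non-decreasing. Non-decreasingness is immediate: a sibling $\push$ uses priority $\priority(x,\Face(A,A'))\ge d_A$ because $\Face(A,A')\subseteq R_A$, and a $\nextlayer$ $\push$ reuses the current priority. The equality $q(x,v)=d_A$ holds for the root $(0,A^x,x)$ and propagates across a $\nextlayer$ push: if $v$ is a minimizer of $q(x,\cdot)$ over $R_A$, then $v$ also lies in $C\coloneqq\nextlayer(A,v)\subseteq R_A$, so $d_A\le d_C\le q(x,v)=d_A$. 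The only delicate case is a sibling push of $A'$ off $A$, where we push at priority $q(x,v')$ with $v'=\Pi_{\Face(A,A')}(x)\in R_{A'}$; since $q(x,v')\ge d_{A'}$ trivially, we must show $q(x,v')\le d_{A'}$, i.e.\ that a minimizer of $q(x,\cdot)$ over $R_{A'}$ can be taken on the face shared with $A$. This is the crux — it is exactly what makes the $S$-set deduplication (which enqueues $A'$ only from the first processed sibling-neighbour) safe.

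It follows from the geometric lemma, which I expect to be the main obstacle. \emph{Fix a partial region $R_P$ whose children are cut out by the next layer's hyperplanes. (a) If $C,C'$ are neighbouring siblings with $d_C<d_{C'}$, then $\dist(x,\Face(C,C'))=d_{C'}$. (b) Every child $C'$ other than the one containing $\Pi_{R_P}(x)$ has a neighbouring sibling $C$ with $d_C<d_{C'}$.} For (a), write $R_C=P'\cap H^{\le}$ and $R_{C'}=P'\cap H^{\ge}$ with $H$ the separating hyperplane and $P'=R_C\cup R_{C'}$ convex; since $d_C<d_{C'}$, some minimizer of $q(x,\cdot)$ over $P'$ lies in $H^{\le}$, so the segment from it to a minimizer over $R_{C'}$ crosses $H$ at a point that, by convexity of $q$ along the segment, is itself a minimizer over $R_{C'}$ and lies in $\Face(C,C')$. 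For (b), let $v^*=\Pi_{R_{C'}}(x)$ and $w=\Pi_{R_P}(x)$; the convex function $t\mapsto q(x,(1-t)v^*+tw)$ is $\ge d_P=q(x,w)$ on $[0,1]$ (the segment stays in $R_P$), equals $d_P$ at $t=1$, hence is non-increasing and, since $d_{C'}>d_P$, strictly decreasing at $t=0$; thus a small step from $v^*$ toward $w$ strictly decreases $q(x,\cdot)$ below $d_{C'}$, so the resulting point cannot lie in $R_{C'}$ and must fall in a sibling $C$ with $d_C<d_{C'}$. (Here I assume $x$ is generic, so each projection lands in the relative interior of a facet and this step crosses a single hyperplane into a \emph{neighbouring} sibling, and the closest child is unique; the general case is handled by a perturbation argument on $x$.)

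Granting the lemma, (i) follows by the standard Dijkstra-style argument: within each sibling family, (b) together with (a) yields, from the closest child, a neighbour-by-neighbour path of strictly increasing $d$ to any other child along which each edge's priority equals the far endpoint's distance; because $A'$ enters $S$ the first time it is reached from a strictly closer neighbour, it is enqueued once (as a sibling) at priority exactly $d_{A'}$, while the closest child is enqueued via $\nextlayer$ at priority $d_{\parent}=d_{A'}$. Splicing these families with the $\nextlayer$ transitions (which keep $d$ constant) shows every full pattern $A$ is eventually popped — hence $\decision(A,x,y)$ invoked — in ascending order of $d_A$, unless the algorithm has already halted. For (ii): $\epsilon^*=\min_A\decision(A,x,y)$ over full patterns, every summand is $\ge\epsilon^*$, and the pattern $A^*$ carrying the nearest decision-boundary point has $d_{A^*}\le\epsilon^*$, so by the time any item of priority $>\epsilon^*$ is popped, $\decision(A^*,\cdot)$ has run and $U$ equals $\min(\epsilon^*,U_0)$. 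The test $U\le d$ then returns exactly $\min(\epsilon^*,U_0)$ — never larger, since no region with $d_A<d$ is left unprocessed, and never smaller, since every $\decision$ value is a genuine decision-boundary distance — which is $\epsilon^*$ once $U_0$ is large enough.
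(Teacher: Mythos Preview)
Your approach is essentially the paper's: establish a monotone-path property (your iterated (b) plays the role of the paper's increasing-sequence proposition, which feeds into its Lemma~A.5) and then run a Dijkstra-style induction; your (a) is exactly the paper's face-distance proposition, with the same segment-crossing proof.

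One step needs repair. In proving (b) you assert that for generic $x$ the projection $v^*=\Pi_{R_{C'}}(x)$ lands in the relative interior of a \emph{facet}, so that a small step toward $w$ crosses a single hyperplane into a neighbouring sibling. This is not a generic property: already for $\ell_2$-projection onto a polytope, the preimage of each face $F$ is $F+N_F$ (its normal cone), which is full-dimensional for faces of \emph{every} dimension---e.g.\ an entire orthant of inputs projects onto a vertex of a cube. Thus for a positive-measure set of $x$ your step from $v^*$ crosses several layer-$l$ hyperplanes simultaneously and lands in a non-neighbour, and a perturbation argument on $x$ cannot close this. The paper handles exactly this difficulty via a separate connectivity lemma (regions of a hyperplane arrangement meeting a line segment are connected through neighbours that also meet the segment), which it then uses inductively to build the monotone path. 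A shorter patch within your framework: since $d_{C'}>d_P$, at least one layer-$l$ constraint must be active at $v^*$ (otherwise $v^*$ would also minimise $q(x,\cdot)$ over all of $R_P$, forcing $d_{C'}=d_P$); the sibling across that active hyperplane contains $v^*$ and hence has distance $\le d_{C'}$, which is strict under your distinct-distances genericity. With (b) repaired this way, the rest of your argument goes through.
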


As a corollary of Theorem~\ref{thm:correctness}, LayerCert-Basic and GeoCert equipped with the same priority function visit the activation patterns in the same order (allowing for permutations of patterns with the exact same priority). 

The primary difference in computational difficulty between the two methods is that the number and complexity of $\priority$ computations is reduced in LayerCert. There are three main reasons for this.
First, LayerCert-Basic updates the set of seen patterns $S$ with $A$ once we have processed any neighbour of $A$. 
Hence, we only do a single $\priority$ computation for each $A$. 
This is not necessarily the case in GeoCert. 
Secondly, the convex programs corresponding to nodes further up the hierarchy have less constraints since they only need to consider all neurons to the respective layer. 
Finally, in LayerCert-Basic we do not need to compute $\priority$ between two $l$-layer partial activation patterns that differ on exactly a single neuron that is not in layer $l$ (i.e.\ these patterns are not siblings). 
GeoCert in contrast computes the priority between any two neighbours as long as the corresponding $\Face$ is non-empty. 
This allows us to amortize the number of convex programs to compute for a single full activation region in GeoCert over multiple levels in LayerCert ---
Consider a full activation pattern $A = (A_1,\dotsc,A_L)$ and the set of all ancestor patterns $B^1,\dotsc,B^{L-1}$ where $B^i \coloneqq (A_1,\dotsc,A_i)$. 
We have $\sum_i^L n_i$ neighbours for GeoCert when processing pattern $A$. 
For LayerCert-Basic, we have up to $n_i$ siblings when processing each $B^i$, for a total of $\sum_i^L n_i$ patterns when processing $B^1,\dotsc,B^{L-1},A$.

These facts can be used to prove our main theoretical result about the complexity of LayerCert-Basic. 
The proof of these and the main theorem are in the supplementary material.


\begin{theorem}\label{thm:main} (Complexity of LayerCert-Basic)
Given an input $x$, suppose the distance to the nearest adversary  returned by LayerCert/GeoCert is not equal to the distance of any activation region from $x$.
Suppose we formulate the convex problems associated with $\decision$ and $\nextlayer$ using Formulations~\eqref{eq:region_poly} and \eqref{eq:rface_poly}. 
We can construct an injective mapping from the set of convex programs solved by LayerCert to the corresponding set in GeoCert such that the constraints in the LayerCert program is a subset of those in the corresponding GeoCert program.
\end{theorem}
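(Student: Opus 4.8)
The plan is to split the convex programs each algorithm solves into the \decision\ programs of \eqref{eq:decision} (with $R_A$ written as in \eqref{eq:region_poly}) and the \priority\ programs of \eqref{eq:priority} (with each face written as in \eqref{eq:rface_poly}), and to build the injection on the two types separately. The foundation is Theorem~\ref{thm:correctness} and its corollary: LayerCert-Basic and GeoCert with the same convex distance priority function process exactly the same set $\mathcal{A}^\star$ of full activation patterns, in the same order, and both halt once the popped priority reaches the returned value $\epsilon^*$. I will also use a byproduct of that proof: whenever LayerCert-Basic processes an $l$-layer pattern $B$ (popped with witness $w$), iterating \nextlayer\ from $(B,w)$ reaches a full pattern $D(B)\in\mathcal{A}^\star$ with $D(B)_k=B_k$ for $k\le l$, and all patterns on this chain are processed consecutively at the same priority with $U$ unchanged until $D(B)$ is reached; moreover, since by the non-degeneracy hypothesis $\epsilon^*$ is not the distance of any activation region, every processed pattern has priority strictly less than $\epsilon^*$ and every region of distance $<\epsilon^*$ is processed. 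The \decision\ part of the injection is then the identity: both algorithms solve \decision$(A,x,y)$ exactly for $A\in\mathcal{A}^\star$, and under \eqref{eq:region_poly} these are the same program.

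For the \priority\ part, first observe that since LayerCert-Basic inserts a sibling into $S$ the moment it is discovered, it solves at most one \priority\ program per discovered pattern, so its \priority\ programs are indexed by the nonempty partial faces $\Face(B,B')$ with $B$ a processed $l$-layer pattern and $B'$ a neighbouring sibling differing at neuron $a$ of layer $l$. I want to send such a face to a GeoCert \priority\ program over a \emph{full} face $\Face(\widehat A,\widehat A')$ where $\widehat A$ is a full descendant of $B$ (so $\widehat A_k=B_k$ for $k\le l$), $\widehat A'$ is $\widehat A$ with neuron $(l,a)$ flipped, $\Face(\widehat A,\widehat A')$ is nonempty, and one of $\widehat A,\widehat A'$ lies in $\mathcal{A}^\star$ so that GeoCert actually solves this program. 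Granting such a target, the subset-of-constraints property is immediate from \eqref{eq:rface_poly}: the LayerCert program has constraint list $\{\,B_k(j)z^k_j\ge 0:k\le l\,\}\cup\{z^l_a=0\}$ and the GeoCert one $\{\,\widehat A_k(j)z^k_j\ge 0:k\le L\,\}\cup\{z^l_a=0\}$, and $\widehat A_k=B_k$ for $k\le l$ makes the former a sub-list of the latter (the inequality at neuron $(l,a)$ is implied by the shared equality $z^l_a=0$, so it is immaterial whether GeoCert writes the full face using $\widehat A$'s or $\widehat A'$'s pattern). The map is injective because $B=(\widehat A_1,\dots,\widehat A_l)$ and then $B'$ are recovered from $\widehat A$ and the level $l$, and \priority\ targets are disjoint from \decision\ targets.

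The main obstacle is to exhibit a valid target $\widehat A$ for every partial face $\Face(B,B')$. The natural candidate is the full activation pattern $A^{v'}$ of the optimizer $v'$ that LayerCert's \priority\ call over $\Face(B,B')$ returns, together with its neuron-$(l,a)$ flip: this full face contains $v'$, hence is nonempty and is a sub-face of $\Face(B,B')$, so the constraint inclusion holds. When the distance $d'$ to $\Face(B,B')$ is strictly below $\epsilon^*$, both $A^{v'}$ and its flip have distance $\le d'<\epsilon^*$ from $x$, so both lie in $\mathcal{A}^\star$ and GeoCert solves the program, finishing that case. The delicate case is the ``frontier'' faces with $d'\ge\epsilon^*$ — still evaluated by both algorithms because a processed region abuts them — for which $A^{v'}$ may fail to be processed while the alternative candidate $\Face(D(B),D(B)^{\mathrm{flip}(l,a)})$ may be \emph{empty}, since the neuron-$(l,a)$ constraint that is active on $R_B$ can become redundant once the extra layers that carve $R_B$ down to $R_{D(B)}$ are imposed. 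Settling this case is the crux: one must argue, from Theorem~\ref{thm:correctness} and the non-degeneracy hypothesis, that some full descendant of $B$ whose neuron-$(l,a)$ face is nonempty is processed by GeoCert, and then fix a canonical choice among the admissible targets so the overall assignment is well defined; the subset and injectivity properties above then conclude the proof.
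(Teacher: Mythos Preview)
Your overall strategy matches the paper's: show that every partial pattern LayerCert processes is an ancestor of some processed full pattern, then send each LayerCert program at a level-$l$ face to a GeoCert program at the corresponding full-depth face, and read off the constraint inclusion from~\eqref{eq:rface_poly}. Where you diverge is in the choice of target. The paper simply uses the canonical full descendant $D(B)$ reached by iterating \nextlayer\ from $B$; it pairs the neuron-$(l,a)$ program at $B$ with the neuron-$(l,a)$ program at $D(B)$, observes that the level-$l$ constraint list is literally a prefix of the level-$L$ one, and stops there. It never requires the target face $\Face\bigl(D(B),D(B)^{\mathrm{flip}(l,a)}\bigr)$ to be nonempty, and it does not route through the optimizer $v'$ at all.

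Your ``frontier'' difficulty is therefore self-imposed. By demanding that the target full face be nonempty (so that GeoCert actually reaches line~18 of Algorithm~\ref{alg:geocert}), you force yourself to exhibit a processed full descendant of $B$ that touches the $(l,a)$-facet of $R_B$, and when $d'\ge\epsilon^*$ there need not be one. The paper sidesteps this because it treats the emptiness test in line~17 and the \priority\ call in line~18 as a single convex program---which is how the face check is implemented in practice (attempt the projection; infeasibility signals an empty face), and is the reading under which the phrase ``we form one program for each neuron'' in the proof makes sense. Under that reading, GeoCert \emph{always} solves a program for the pair $(D(B),(l,a))$ with the full constraint list of~\eqref{eq:rface_poly}, injectivity follows because $B$ is the level-$l$ prefix of $D(B)$, and your hard case disappears. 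If instead you insist on counting only \priority\ optimizations over \emph{nonempty} faces, your concern is legitimate---but then the paper's brief argument has exactly the gap you identified, and neither proof closes it.
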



\section{The LayerCert Framework} \label{sec:full}

\begin{algorithm}[t]
   \caption{LayerCert Framework}
   \label{alg:layercert}
\begin{algorithmic}[1]
  \STATE {\bfseries Input:} $x$, $y$ (label of $x$), $U$ (upper bound)
  \STATE $d,A,v,M \gets \restrict(x,U)$
  \STATE $Q \gets $ empty priority queue
  \STATE $Q.\push((d,A,v))$
  \STATE $S \gets \emptyset$
  \WHILE{$Q \neq \emptyset$}
    \STATE $(d, A, v, M) \gets Q.\pop()$
    \IF{$U \leq d$}
      \STATE \textbf{return} $U$
    \ENDIF
    \IF{$A$ is a full activation pattern}
      \STATE $U \gets \min(\decision(A, x),U)$
    \ELSE
      \IF{$\reachable(A,x,U) =$ `maybe'}
          \STATE $(d', A', v) \gets \nextlayer(A,v)$ 
          \STATE $Q.\push(d',A',v,M)$
      \ENDIF
    \ENDIF
    \FOR{$A' \in N_{\text{current\_layer}}(A)\setminus S$}
      \IF{$\Face(A,A') \cap M$ is nonempty}
        \STATE $v',d' \gets \priority(x, \Face(A,A') \cap M)$ 
        \STATE $Q.\push((d',A',v',M))$
        \STATE $S \gets S \cup \{ A' \}$
      \ENDIF
    \ENDFOR
  \ENDWHILE
\end{algorithmic}
\end{algorithm}

We now describe how our hierarchical approach is amenable to the use of convex relaxation-based lower bounds  to prune the search space. 
For simplicity, in this section we will assume that we are performing verification with respect to a \emph{targeted} class $y'$.
The general LayerCert framework is presented in Algorithm \ref{alg:layercert}.
The two new subroutines in the algorithm are $\reachable$ and $\restrict$.

Let $B_{p,r}(x)$ denote the $\ell_p$-ball of radius $r$ around $x$.
The subroutine $\reachable(A,x,r)$ returns `false' when $R_A \cap B_{p,r}(x)$  is \emph{guaranteed not} to intersect with a decision boundary. This means that we can remove $A$ and all its descendants from consideration. 
Otherwise, it returns `maybe' and we proceed on to the next level.

The routine $\restrict(x,r)$ explicitly computes a convex set $M$ that is a superset of the part of the decision boundary contained in $B_{p,r}(x)$. 
This both restricts the search directions we need to consider and also allows us to \emph{warm start} the algorithm by projecting the initial point onto this set.
In the following we describe some possible choices for these subroutines. 
In the appendix we discuss a version of LayerCert that recursively applies a modified version of $\restrict$ to aggressively prune the search space. 

\paragraph{Pruning partial activation regions.}

We can use incomplete verifiers (see Section~\ref{sec:related} for references) to check if the region $B_{p,U}$ \emph{might} contain a decision boundary. 
These methods work by implicitly or explicitly computing some overapproximation of the set 
\begin{align} \label{eq:overapprox}
\{ f_y(v) - f_{y'}(v) \:|\: v \in B_{p,U}  \}.
\end{align}
If all values in \eqref{eq:overapprox} are strictly positive, then we know that the decision boundary cannot be in $R_A \cap B_{p,U}$.

Many incomplete verifiers work by constructing convex relaxations of each nonlinear ReLU function. 
If a neuron is guaranteed to be always on or always off over all points in the region of interest, we can use this to tighten the convex relaxation.
This allows us to incorporate information from the current partial activation region into incomplete verifiers.
For our experiments, we choose to use the efficient interval arithmetic approach (first applied in the context of neural network verification by \citet{xiang_output_2018}). 
Below we describe a version of the method that incorporates information about partial regions:
\begin{align*}
  \notag X^0 &\coloneqq B_{p,U}(x),\\
  Y^i &\coloneqq \text{Interval}_A(X^{i-1}) & &\text{for } i \in [L+1],   \\
  Z^{i} &\coloneqq W^{i-1} Y^i + b^{i-1} & &\text{for } i \in [L], \\
  X^i &\coloneqq \relu(Z^i) & & \text{for } i \in [L],
\end{align*}
where $\text{Interval}_A(X)$ is given by the following set: 
$$\left\{ v \:\middle|\: \min_{w \in X} w_j \leq v_j \leq \min(p_j,\max_{w \in X} w_j) \right\}$$ where $p_j$ is $0$ if $A(i,j) = -1$ and $\infty$ otherwise. 
We can use the fact that $f(B_{p,U}(x) \cap R_A) \subseteq W^L Y^{L+1} + b^L$ to check if we need to explore the descendants of $A$.

\paragraph{Warm starts via restricting search area.}

Certain incomplete verifiers implicitly generate enough information to allow us to efficiently compute a convex set $M$ that contains the decision boundary. 
We will describe how to do this for verifiers based on simple linear underestimates of $f_y - f_{y'}$ in the ball $B_{p,U}(x)$ such as  Fast-Lin \citep{weng_towards_2018} and CROWN \citep{zhang_efficient_2018}. 
These methods construct a linear function $g \leq f_y - f_{y'}$ by propagating linear and upper bounds of the ReLUs through the layers. The resulting set $\{v \:|\: g(v) \leq 0\}$ is a halfspace that we can efficiently project onto.
Figure \ref{fig:warmstart} illustrates this concept.
\begin{figure}
\centering
\includegraphics[width=0.98\linewidth]{./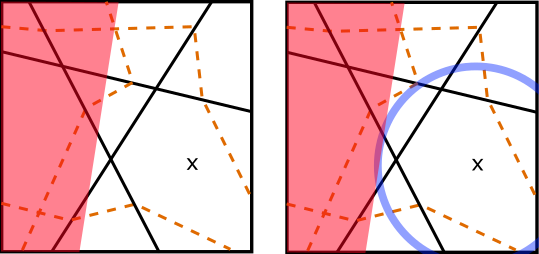}
\caption{Computing a set that contains the decision boundary and warm starting. 
The red shaded region corresponds to the region that contains any potential decision boundary. 
By identifying this region, we can warm start the algorithm by initializing the search at the closest point in the red region. 
\label{fig:warmstart}} 
\end{figure}
For the purposes of this paper we use the lower bound from the CROWN verifier \citep{zhang_efficient_2018} to compute a halfspace $M$.

Once we have computed $M$, we can use it throughout our algorithm. In particular, if a face does not overlap with $M$, we can remove it from consideration in our algorithm. We discuss this in more detail in the supplementary.

Instead of just using linear approximations, we can also use tighter approximations such as the linear programming relaxation that models single ReLUs tightly (see for example \citet{salman_convex_2019})
These result in a smaller convex set $M$ that is still guaranteed to contain the decision boundary but overlaps with less regions, resulting in a reduction in the search space.
The drawback of using such methods is that the representation of $M$ can get significantly more complicated, which in turn increases the cost of solving Problems~\eqref{eq:priority} and \eqref{eq:decision}.

{
\begin{table*}
\begin{center}
\includegraphics[width=1.0\linewidth]{./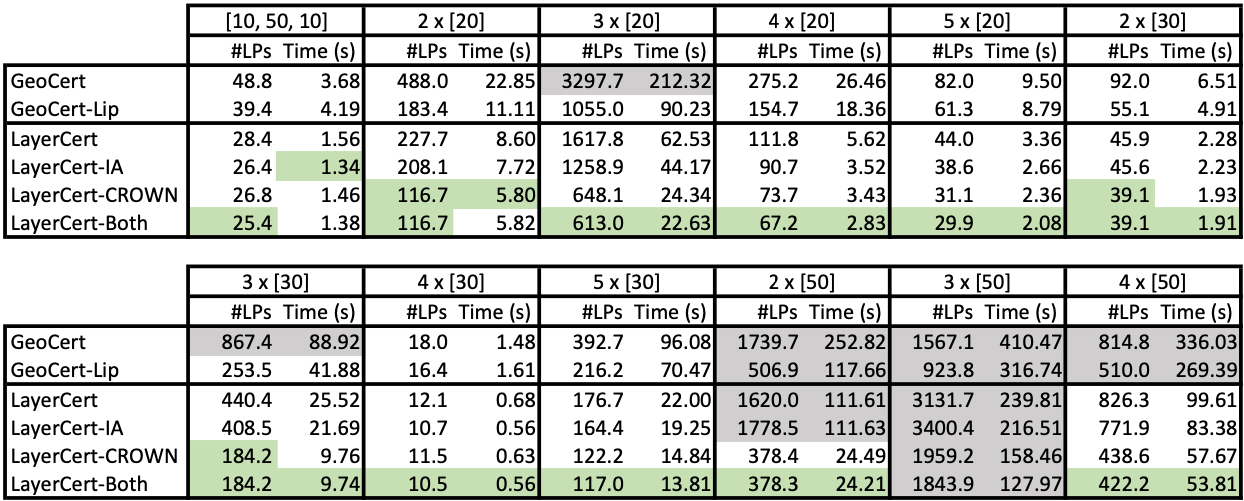}
\vspace{-5pt}
\caption{
Average number of convex programs and running time over 100 inputs for 12 different networks for $\ell_\infty$-distance.
The green shaded entries indicate the best performing method for each neural network under each metric.
The gray shaded entries indicates the algorithm timed out before an exact solution to Problem~(\ref{eq:robustness}) could be found for \emph{at least one input}.
Whenever a timeout occurs, we use a time of 1800 seconds in its place, which leads to an underestimate of the true time.
\label{tb:results}}
\end{center}
\end{table*}
}

\section{Experimental Evaluation} \label{sec:experiments}


We use an experimental setup similar to the one used in \citet{jordan_provable_2019}. 
The two experiments presented here are taken directly from them, except for an additional neural network in the first and a larger neural network in the second.
Additional results are presented in the appendix. \

We consider a superset of the networks used in GeoCert. 
The fully connected ReLU networks we use in the paper have two to five hidden layers, with between 10 to 50 neurons in each layer.
As with \citet{jordan_provable_2019}, we train our networks to classify the digits 1 and 7 in the MNIST dataset and used the same training parameters.
We consider $\ell_\infty$ distance here and also $\ell_2$ distance in the appendix. 

\paragraph{Methods evaluated.}
We test the following variants of GeoCert and LayerCert. 
All variants of LayerCert use an $\ell_p$ distance priority function.
\begin{itemize}
  \item \vspace{-3pt} GeoCert with $\ell_p$ distance priority function.
  \item \vspace{-3pt} GeoCert-Lip: GeoCert with $\| x-v \|_p + \min_{y\neq j}\frac{f_y(v) - f_j(v)}{L}$ priority function, where $L$ denotes an upper bound on the Lipschitz constant found by the Fast-Lip method \citep{weng_towards_2018}.
  \item \vspace{-3pt} LayerCert-Basic.
  \item \vspace{-3pt} LayerCert with interval arithmetic pruning.
  \item \vspace{-3pt} LayerCert with warmstart in the initial iteration using CROWN \citep{zhang_efficient_2018} to underestimate $f_y - f_j$.
  \item LayerCert-Both: with both interval arithmetic pruning and CROWN-based warm start.
\end{itemize}
We initialize each algorithm with a target verification radius of $0.3$.
Each method terminates when we have exactly computed the answer to \eqref{eq:robustness} 
 or when it has determined that the lower bound is at least the radius.
 
 {
\begin{figure*}[t]
\begin{center}
\includegraphics[width=0.2\linewidth]{./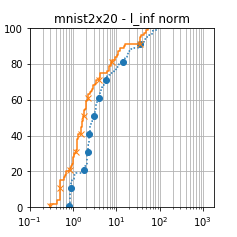}
\includegraphics[width=0.2\linewidth]{./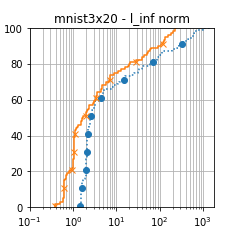}
\includegraphics[width=0.2\linewidth]{./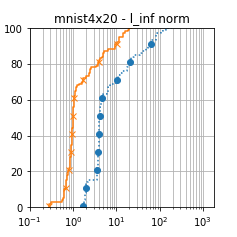}
\includegraphics[width=0.2\linewidth]{./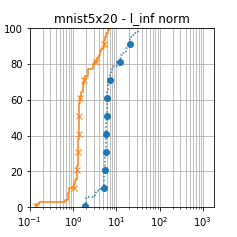} \\
\vspace{-5pt}
\includegraphics[width=0.2\linewidth]{./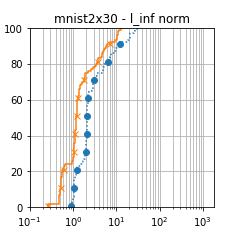}
\includegraphics[width=0.2\linewidth]{./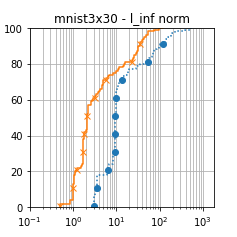}
\includegraphics[width=0.2\linewidth]{./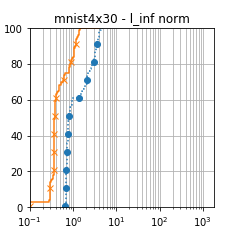}
\includegraphics[width=0.2\linewidth]{./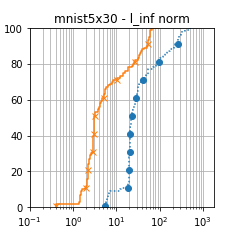} \\
\vspace{-5pt}
\includegraphics[width=0.2\linewidth]{./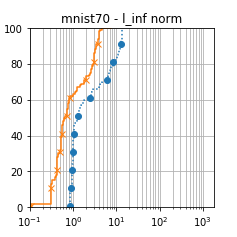}
\includegraphics[width=0.2\linewidth]{./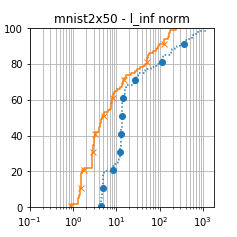}
\includegraphics[width=0.2\linewidth]{./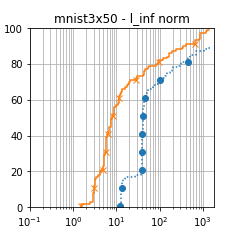}
\includegraphics[width=0.2\linewidth]{./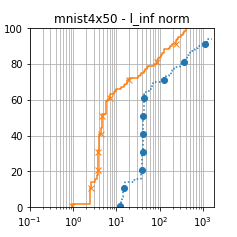}
\vspace{-5pt}
\caption{Performance profiles comparing LayerCert-Both (orange, `x' markers) with GeoCert-Lip (blue, dotted, circle markers) over 12 different networks using $\ell_\infty$ norm. Marks are placed for every 10 input points.
\label{fig:perfprof_inf}}
\end{center}
\end{figure*}
\vspace{-10pt}
}

\paragraph{Implementation details.}
Our implementation of GeoCert \citep{jordan_provable_2019} starts with the original code provided by the authors at \url{https://github.com/revbucket/geometric-certificates} and modifies it to make use of open-source convex programming solvers to enable further research to a wider community, as the original code uses Gurobi \citep{gurobi}, which is a commercial software.  
We use CVXPY \citep{cvxpy} to model the convex programs and the open-source ECOS solver \citep{ecos} as the main solver.
Since ECOS can occasionally fail, we also use OSQP \citep{osqp} and SCS \citep{o2016conic} as backup solvers.
In our tests, ECOS and OSQP are significantly faster than SCS which we use only as a last resort.

We implemented LayerCert in Python using the packages numpy, PyTorch, numba (for the lower bounding methods), and the aforementioned packages for solving convex programs. 
Our experiments were performed on an Ubuntu 18.04 server on an Intel Xeon Gold 6136 CPU with 12 cores.
We restricted the algorithms to use only a single core and do not allow the use of the GPU. 
All settings, external packages, and compute are identical for all the methods compared. 

\subsection{Exact measurement experiments.}
We randomly picked 100 `1's and `7's from MNIST and collected the wall-clock time and number of linear programs solved. 
Since some instances can take a very long time to solve fully, we set a time limit of 1800s for each instance.

\paragraph{Averaged metrics.}
We measure (1) the average of the wall-clock time taken to measure the distances and (2) the average of number of linear programs solved. 
The first metric is what matters in practice but is heavily dependent on the machine and the solver used, whereas the second metric provides a system-independent proxy for time.
We present the results in Table~\ref{tb:results}. 
The basic GeoCert method is consistently outperformed in both metrics by our methods, while GeoCert-Lip always improves on GeoCert in terms of number of LPs and often in terms of run time. 
The method with the fastest run time is always one of the three LayerCert variants that use lower bounds. 
With the exception of the $3 \times [50]$ network where all methods timed out, the method with the least number of convex programs solved is always LayerCert-Both.
Plain LayerCert always outperforms both GeoCert methods in terms of timing, while it often outperforms GeoCert-Lip in number of LPs.
In some cases only one of the two lower bounding methods helps significantly, though the use of both methods at most includes a small overhead.
Thus, we recommend in general using both.

{

\paragraph{Performance profiles.}
Performance profiles \citep{dolan_benchmarking_2002} are a commonly-used technique to benchmark the performance of different optimization methods over \emph{sets of instances}. 
A performance profile is a plot of a cumulative distribution, where the $x$-axis denotes the amount of time each method is allowed to run for, and the $y$-axis the number of problems that can be solved within that time period. 
If the performance profile of a method is consistently above the performance profile of another method, it indicates that the former method is always able to solve more problems regardless of the time limit.
We illustrate the performance profiles for GeoCert-Lip and LayerCert-Both in Figure~\ref{fig:perfprof_inf}. LayerCert-Both consistently dominates GeoCert-Lip.



\section{Conclusion}

We have developed a novel hierarchical framework LayerCert for exact robustness verification that leverages the nested hyperplane arrangement structure of ReLU networks. 
We prove that a basic version of LayerCert is able to reduce the number and size of the convex programs over GeoCert using the equivalent priority functions. 
We showed that LayerCert is amenable to the use of lower bounding methods that use convex relaxations to both prune and warm start the algorithm. 
Our experiments showed that LayerCert can significantly outperform variants of GeoCert.

\section*{Acknowledgments}

We would like to thank the anonymous reviewers for suggestions that helped refine the presentation of this work.
We would also like to thank Eilyan Bitar, Ricson Cheng, and Jerry Liu for helpful discussions. 


\bibliography{verify}
\bibliographystyle{icml2020}

\clearpage

\onecolumn

\appendix


\section{An Example Comparing GeoCert and LayerCert-Basic}

The following simple example of a network with two hidden layers over a 2D input demonstrates the advantage of LayerCert-Basic over GeoCert. Recall the definition of the neural network from \eqref{eq:z}. We pick the following values for weights and biases of the network:
\begin{align*}
W_0 &\coloneqq \begin{bmatrix}
1 & 0 \\
0 & 1  
\end{bmatrix}, 
\quad b_0 \coloneqq \begin{bmatrix}
0   \\
0  
\end{bmatrix},
\\
W_1 &\coloneqq \begin{bmatrix}
1 & 1 
\end{bmatrix},
\quad\, b_1 \coloneqq -1,
\\
W_2 &\coloneqq \begin{bmatrix}
1 \\
0 
\end{bmatrix},
\qquad\;\, b_2 \coloneqq \begin{bmatrix}
0   \\
10  
\end{bmatrix}.
\end{align*}
Let the input point be $x \coloneqq [-1, -1.25]^\intercal$. The input space and the corresponding graphs for LayerCert and GeoCert are illustrated in Figure~\ref{fig:example}. We leave the decision boundary off the figure.
{
\vspace{3pt}
\begin{figure}[H]
\begin{center}
\includegraphics[width=0.5\linewidth]{./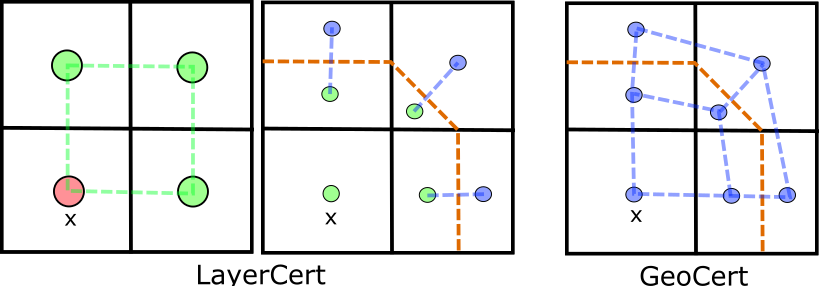}
\caption{Input space with hierarchical graph for LayerCert and neighborhood graph for GeoCert.
\label{fig:example}}
\end{center}
\end{figure}
\vspace{-14pt}
}
The next figure shows the order in which GeoCert processes the nodes and the edges (distance computations). At each node, we check if there is a decision boundary contained within the region. As for each each, we compute the distance from the input $x$ to the boundary of the two regions that is connected by the edge.
{
\vspace{3pt}
\begin{figure}[H]
\begin{center}
\includegraphics[width=0.35\linewidth]{./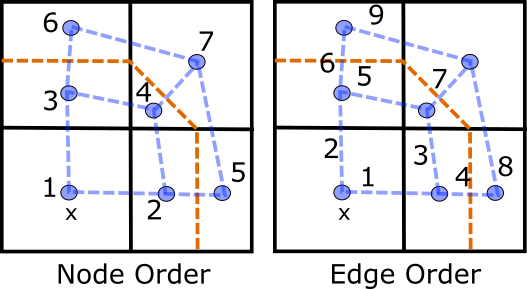}
\caption{GeoCert Node and Edge Processing Order.
\label{fig:example_geocert}}
\end{center}
\end{figure}
\vspace{-14pt}
}
For LayerCert, we visit more nodes. Note that we still visit all nodes in the last layer in the same relative order. 
We only need to check for decision boundaries for the nodes in the last layer. 
In addition, within the first layer, we do not have to compute the distance to the top-right green node more than once.
{
\vspace{3pt}
\begin{figure}[H]
\begin{center}
\includegraphics[width=0.35\linewidth]{./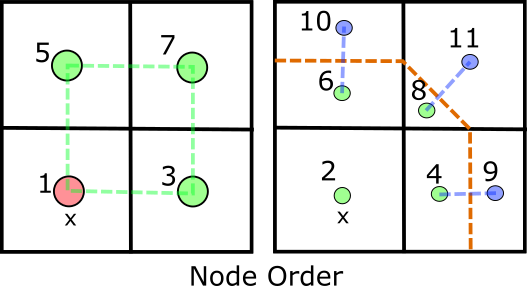}
\qquad
\includegraphics[width=0.35\linewidth]{./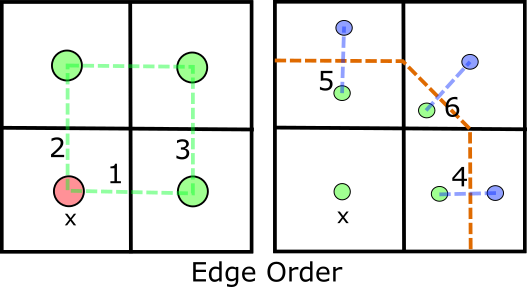}
\caption{LayerCert Node and Edge Processing Order.
\label{fig:example_layercert}}
\end{center}
\end{figure}
\vspace{-14pt}
}

\section{Proofs for LayerCert-Basic}

\subsection{Lemmas on Hyperplane Arrangements}

Before we prove the result for the full hierarchical structure, we focus on the simpler case of just hyperplane arrangements.

We show that for each pattern $A$, $\dist(R_A,x)$ can be obtained by considering \emph{any} neighboring pattern $A'$ such that $R_{A'}$ is closer to $x$ than $R_A$ is. 
This fact is allows us to skip the computation of some of the distances compared to GeoCert.  
\begin{proposition} \label{prop:hyperplane_arrangement_distance}
Let $\dist$ be a convex distance function.
Given a nonempty region $R_A$ and a nonempty neighboring region $R_{A'}$ such that $\dist(R_A,x) \geq \dist(R_{A'},x)$, we have $\dist(R_{A},x) = \dist(\Face(A,A'),x)$.
\end{proposition}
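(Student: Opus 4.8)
The plan is to exploit the fact that $R_A$ and $R_{A'}$ are neighboring regions: they lie on opposite sides of a single separating hyperplane $H$ while sharing every other defining halfspace, and $\Face(A,A') = R_A \cap H = R_{A'} \cap H$. Write $\dist(\cdot,x) = \min_v q(x,v)$ for the underlying convex function $q$; since $q$ is a distance function its sublevel sets are bounded, so the minimum over each of the nonempty closed polyhedra $R_A$ and $R_{A'}$ is attained. Call the minimizers $w \in R_A$ and $v \in R_{A'}$, so $q(x,w) = \dist(R_A,x)$ and $q(x,v) = \dist(R_{A'},x)$.

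First I would note that $\Face(A,A') \subseteq R_A$ immediately gives $\dist(\Face(A,A'),x) \geq \dist(R_A,x)$, so it suffices to produce a single point $u \in \Face(A,A')$ with $q(x,u) \leq \dist(R_A,x)$. Write $H = \{x \:|\: a^\intercal x = b\}$ and orient things so that $R_A \subseteq H^{\leq}$ and $R_{A'} \subseteq H^{\geq}$; then $a^\intercal w \leq b \leq a^\intercal v$. By the intermediate value theorem along the segment $[w,v]$, there is $\lambda \in [0,1]$ with $u \coloneqq \lambda w + (1-\lambda)v$ satisfying $a^\intercal u = b$. Since $w$ and $v$ each satisfy all the defining inequalities of $R_A$ (equivalently, of $R_{A'}$) other than the one coming from $H$, and those inequalities cut out a convex set, $u$ satisfies them too; combined with $a^\intercal u = b$ this places $u$ in $R_A \cap H = \Face(A,A')$.

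The crux is then the convexity estimate: $q(x,u) \leq \lambda q(x,w) + (1-\lambda) q(x,v) = \lambda \dist(R_A,x) + (1-\lambda)\dist(R_{A'},x)$, and since the hypothesis gives $\dist(R_{A'},x) \leq \dist(R_A,x)$, the right-hand side is at most $\dist(R_A,x)$. Hence $q(x,u) \leq \dist(R_A,x)$, and together with $u \in \Face(A,A') \subseteq R_A$ this forces $\dist(\Face(A,A'),x) = q(x,u) = \dist(R_A,x)$, which is the claim.

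I do not anticipate a real obstacle here — the argument is a one-segment convexity argument. The only points requiring care are the degenerate cases ($w$ or $v$ already lying on $H$, or $\lambda \in \{0,1\}$), which the intermediate-value/convexity reasoning handles uniformly, and making explicit what ``convex distance function'' means: we need both that $q(x,\cdot)$ is convex and that its minima over nonempty closed polyhedra are attained (both hold for $\ell_p$, $p \geq 1$). Neither assumption is restrictive, but both are used.
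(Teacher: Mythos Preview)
Your argument is correct and is essentially the same as the paper's: both pick minimizers in the two neighboring regions, draw the segment between them, use the intermediate value theorem to locate a point on the separating hyperplane, and then use convexity of the distance together with the hypothesis $\dist(R_{A'},x)\le\dist(R_A,x)$ to bound its value. The paper packages this as a slightly more general lemma (Lemma~\ref{lem:one_cut_eq}, with an arbitrary convex cut $g$ in place of the affine hyperplane), but the underlying one-segment convexity argument is identical to yours.
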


Proposition~\ref{prop:hyperplane_arrangement_distance} is a special case of the following lemma:
\begin{lemma}\label{lem:one_cut_eq}
Let $C$ be a closed convex set and $f,g: \bR^n \rightarrow \bR$ be convex functions. 
Suppose the sets $D \coloneqq C \cap \{ v \:|\: g(v) \leq 0 \}$ and $C\setminus D$ are nonempty and suppose $\min_{x\in {C \setminus D}} f(x) \leq \min_{x\in D} f(x)$. 
Then there exists $x^* \in \argmin_{x \in D} f(x)$ satisfying $g(x^*) = 0$. 
\end{lemma}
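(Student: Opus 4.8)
The plan is to argue by contradiction, using convexity of $f$ and a line-segment argument. Suppose, contrary to the claim, that every minimizer $x^*$ of $f$ over $D$ satisfies $g(x^*) < 0$ (note $g(x^*) \le 0$ always holds on $D$, so the only alternative to $g(x^*)=0$ is a strict inequality). Pick any such minimizer $x^* \in \argmin_{x\in D} f(x)$, so $g(x^*) < 0$, meaning $x^*$ lies in the "interior" of the cut in the sense that it is strictly inside the halfspace $\{g \le 0\}$. By hypothesis there is a point $w \in C \setminus D$ with $f(w) \le \min_{x\in D} f(x) = f(x^*)$; since $w \notin D$ and $w \in C$, we must have $g(w) > 0$.

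Now consider the segment $[x^*, w] = \{ (1-t)x^* + t w : t \in [0,1] \}$, which lies entirely in $C$ by convexity of $C$. The function $t \mapsto g((1-t)x^* + tw)$ is convex, is strictly negative at $t=0$ and strictly positive at $t=1$, hence there is a \emph{first} crossing: let $t_0 \in (0,1)$ be the smallest $t$ with $g((1-t)x^* + tw) = 0$, and set $\bar{x} \coloneqq (1-t_0)x^* + t_0 w$. Then $\bar{x} \in C$ and $g(\bar{x}) = 0 \le 0$, so $\bar{x} \in D$. On the other hand, by convexity of $f$,
\begin{align*}
f(\bar{x}) = f\big((1-t_0)x^* + t_0 w\big) \le (1-t_0) f(x^*) + t_0 f(w) \le (1-t_0) f(x^*) + t_0 f(x^*) = f(x^*).
\end{align*}
Since $f(x^*)$ is the minimum of $f$ over $D$ and $\bar{x} \in D$, all inequalities are equalities, so $f(\bar{x}) = f(x^*) = \min_{x \in D} f(x)$, i.e.\ $\bar{x}$ is itself a minimizer of $f$ over $D$ with $g(\bar{x}) = 0$ — contradicting our assumption that no minimizer satisfies $g(x^*)=0$. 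This proves the lemma.

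The only genuinely delicate point is ensuring the segment actually crosses the boundary $\{g=0\}$ at a point that still lies in $D$, and that $t_0$ can be taken in the open interval $(0,1)$ so that $\bar x$ is a bona fide convex combination with positive weight on both endpoints — this is where we use $g(x^*)<0$ strictly (from the contradiction hypothesis) and $g(w)>0$ strictly (from $w \notin D$), together with continuity/convexity of $g$ along the segment. One should also note the edge case where $f(w) = f(x^*)$ versus $f(w) < f(x^*)$ is handled uniformly by the displayed chain of inequalities. To recover Proposition~\ref{prop:hyperplane_arrangement_distance}, take $C = R_{A'}$... more carefully: take $C$ to be the union $R_A \cup R_{A'}$ viewed through the single hyperplane separating them — concretely, let $H = \{v : a^\intercal v = b\}$ be the hyperplane on which $A$ and $A'$ differ, with $R_A \subseteq H^{\le}$ and $R_{A'} \subseteq H^{\ge}$; set $C$ to be the polyhedron defined by all the \emph{shared} constraints of $R_A$ and $R_{A'}$, $g(v) = a^\intercal v - b$, $f(v) = \dist(v,x)$, $D = C \cap \{g \le 0\} = R_A$, and $C \setminus D$ has closure $R_{A'}$. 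Then $\min_{C\setminus D} f \le \dist(R_{A'},x) \le \dist(R_A,x) = \min_D f$ gives the hypothesis, and the conclusion $\dist(R_A,x) = \min\{f(v) : v \in D, g(v) = 0\} = \dist(\Face(A,A'),x)$ follows. (A small amount of care is needed because $C \setminus D$ is not closed; one works with its closure, which only helps the inequality.)
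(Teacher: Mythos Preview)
Your proof is correct and follows essentially the same line-segment argument as the paper: take a minimizer $x^*\in D$ with $g(x^*)<0$, a point $w\in C\setminus D$ with $f(w)\le f(x^*)$ and $g(w)>0$, and use convexity of $f$ along the segment to show the boundary crossing point is also a minimizer. The only cosmetic differences are that the paper states it directly rather than by contradiction and picks $w$ as an actual minimizer over $C\setminus D$, whereas you (slightly more cleanly) only require $f(w)\le f(x^*)$.
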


\begin{proof}
Pick some $z \in \argmin_{v \in C \setminus D} f(v)$ and $y \in \argmin_{v \in D} f(v)$. If $g(z) = 0$ or $g(y) = 0$, we are done. 
Suppose then that $g(z) > 0$ and $g(y) < 0$. 
Let $w = z - y$ and pick $\lambda \in (0,1)$ such that $g(y + \lambda w) = 0$. 
It follows that $y + \lambda w$ is in $D$ and by the convexity of $f$ we have $f(y + \lambda w) \leq f(y)$. 
Hence $y + \lambda w$ satisfies the claim.
\end{proof}

The next thing we show about hyperplane arrangements will be useful when we consider nested hyperplane structures.
\begin{proposition} \label{prop:increasing_sequence}
Let $C$ be a convex set in $\bR^n$ and consider a hyperplane arrangement. 
Let $\dist$ be a convex distance function.
Let $A$ be a pattern that contains a point in $\argmin_{v \in C} \dist(v,x)$ and consider another pattern $A'$ such that $R_{A'} \cap C$ is nonempty.
Then there is a sequence of patterns $A=A^1,A^2,\dotsc,A^{k},A^{k+1}=A'$ such that we have
\begin{itemize}
  \item $A^{i}$ and $A^{i+1}$ are neighboring patterns for $i \in [k]$,
  \item $\dist(R_{A^{i}}\cap C,x) \leq \dist(R_{A^{i+1}}\cap C,x)$ for $i \in [k]$, and
  \item $R_{A^i} \cap C$ is nonempty for $i \in [k+1]$.  
\end{itemize} 
\end{proposition}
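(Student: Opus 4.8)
The plan is to build the sequence by walking from the region $R_{A'}\cap C$ back toward $R_A\cap C$, crossing one hyperplane of the arrangement at a time, always in the direction of decreasing distance to $x$. Concretely, I would pick a point $y^* \in \argmin_{v \in R_{A'}\cap C}\dist(v,x)$ and a point $x^*\in \argmin_{v\in C}\dist(v,x)$ (which by hypothesis lies in $R_A$), and consider the segment $[y^*,x^*]$. Since $C$ is convex this segment stays in $C$, and as we move along it from $y^*$ to $x^*$ we pass through a finite sequence of regions of the hyperplane arrangement; consecutive regions along the segment are neighbors (they differ on exactly the one hyperplane being crossed — generically we can perturb the segment slightly, or argue directly, so that it crosses hyperplanes one at a time). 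This produces a chain of patterns $A'=B^1,B^2,\dots,B^m$ with $B^m$ equal to the pattern containing $x^*$, i.e.\ $B^m = A$ (or at least a pattern whose region also contains a minimizer of $\dist(\cdot,x)$ over $C$; I will need to be slightly careful here and may take $A$ to be the pattern of $x^*$ and then note any such $A$ works by the same argument, or handle ties). Reversing this chain gives the candidate sequence $A = A^1,\dots,A^{k+1}=A'$, and by construction each $R_{A^i}\cap C$ is nonempty (it contains a point of the segment) and consecutive patterns are neighbors.

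The remaining task is the monotonicity of $\dist(R_{A^i}\cap C,x)$ along the chain, and this is where I would invoke the one-cut lemma. For each consecutive pair $A^i, A^{i+1}$, the two regions meet along a single hyperplane $H$, so inside $C$ we are in exactly the situation of Lemma~\ref{lem:one_cut_eq}: take the convex set to be $C' \coloneqq (R_{A^i}\cup R_{A^{i+1}})\cap C$ split by the halfspace cut from $H$ into $D = R_{A^{i+1}}\cap C$ and $C'\setminus D = R_{A^i}\cap C$ (or whichever orientation is correct). If it were the case that $\dist(R_{A^{i+1}}\cap C,x) < \dist(R_{A^i}\cap C,x)$, I need to derive a contradiction with the fact that the global minimizer $x^*$ lies further along the chain (in $R_{A^i}$ or beyond). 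The clean way: argue inductively from the $A$-end. Since $x^*\in R_{A}\cap C$ is a global minimizer over $C$, $\dist(R_{A^1}\cap C,x) = \dist(C,x)$ is minimal, so trivially $\dist(R_{A^1}\cap C,x)\le \dist(R_{A^2}\cap C,x)$. For the inductive step, suppose $\dist(R_{A^i}\cap C,x)\le \dist(R_{A^{i+1}}\cap C,x)$ fails, i.e.\ the minimum over $R_{A^{i+1}}\cap C$ is strictly smaller; but then applying Lemma~\ref{lem:one_cut_eq} with $C',D$ as above and $f = \dist(\cdot,x)$ shows the minimizer of $f$ over $R_{A^i}\cap C$ lies on $H = \Face(A^i,A^{i+1})$, hence also lies in $R_{A^{i+1}}\cap C$, forcing $\dist(R_{A^{i+1}}\cap C,x)\le \dist(R_{A^i}\cap C,x)$ — contradiction. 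So monotonicity holds at every step.

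Actually, to make the induction drive the right direction I would instead phrase it as: among all chains from $A$ to $A'$ through nonempty regions (in $C$) with consecutive neighbors — such a chain exists by the segment construction above — pick one, and show that any local violation of monotonicity can be bypassed, or simply prove the pointwise statement "if $B,B'$ are neighbors, both nonempty in $C$, and $R_B\cap C$ contains a point at distance $\dist(R_B\cup R_{B'})\cap C,x)$, then $\dist(R_B\cap C,x)\le \dist(R_{B'}\cap C,x)$" via Lemma~\ref{lem:one_cut_eq}, then note that along the segment $[y^*,x^*]$ the distance-to-$x$ of the successive regions is automatically sorted because moving toward the global minimizer $x^*$ cannot increase the achievable distance. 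I expect the main obstacle to be the bookkeeping around the segment crossing hyperplanes cleanly (one at a time, staying in the relative interiors so neighbor-relations are exactly right) and the tie-breaking when several regions are equidistant; the distance-monotonicity itself is essentially a direct corollary of Lemma~\ref{lem:one_cut_eq} applied edge by edge.
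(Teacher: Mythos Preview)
Your segment-based construction does not yield a monotone chain, and the attempted proof of monotonicity via Lemma~\ref{lem:one_cut_eq} is circular. You assume $\dist(R_{A^{i+1}}\cap C,x) < \dist(R_{A^{i}}\cap C,x)$, apply the lemma, and derive that the minimizer over $R_{A^{i}}\cap C$ lies on the shared face, hence $\dist(R_{A^{i+1}}\cap C,x) \le \dist(R_{A^{i}}\cap C,x)$. But that conclusion is \emph{implied} by your assumption, not contradicted by it. The fallback intuition that ``moving toward $x^*$ cannot increase the achievable distance'' is correct for \emph{points on the segment}, but says nothing about the region minima $\dist(R_{B^i}\cap C,x)$, which can be achieved far off the segment.

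In fact the chain along $[y^*,x^*]$ need not be monotone at all. Take $C=\bR^2$, $x=(0,0)$, and the arrangement $H_1:\{x_1=10\}$, $H_2:\{x_1-7x_2=2\}$, $H_3:\{x_1-x_2=1\}$. With $A'$ the region $\{x_1\ge 10,\ x_1-7x_2\ge 2,\ x_1-x_2\ge 1\}$ one gets $y^*=(10,0)$, and the segment to the origin crosses $H_1,H_2,H_3$ at $x_1=10,2,1$ in that order, giving regions $B^1,B^2,B^3,B^4$. Direct computation gives $\dist(B^2,x)=\sqrt{1/2}\approx 0.707$ (achieved at $(1/2,-1/2)$) but $\dist(B^3,x)=\sqrt{26}/6\approx 0.850$ (achieved at $(5/6,-1/6)$). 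So $d_2<d_3$, and your reversed chain $A=B^4,B^3,B^2,B^1=A'$ fails monotonicity at the second step. A valid monotone chain exists (e.g.\ through the region $\{x_1\le 10,\ x_1-7x_2\ge 2,\ x_1-x_2\le 1\}$ instead of $B^3$), but it is not the one the segment produces.

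The paper avoids this by \emph{not} committing to a single segment. It inducts on the finitely many distance values $\dist(R_{A'}\cap C,x)$: given $A'$ at distance $d'$, it uses the segment $[v,v']$ only to locate \emph{one} region $B$ containing $v'$ with strictly smaller distance, then invokes a separate connectivity lemma (Lemma~\ref{lem:line}, about regions touching a common point) to link $A'$ to $B$ through regions of distance at most $d'$, and finally appeals to the inductive hypothesis for $B$. The monotone path is thus assembled piecewise across distance levels rather than read off from one straight line.
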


Before we prove Proposition~\ref{prop:increasing_sequence}, we will prove the following result about the hyperplane arrangements. 
We use the terms `regions' and `neighbors' here similar to their use in Definitions~\ref{def:neighbors} and \ref{def:full_activation}.

\begin{lemma}\label{lem:line}
Consider a hyperplane arrangement, a convex set $C$, and a line with endpoints contained entirely within $C$. 
Consider the undirected graph $G$ where the nodes correspond the regions of the arrangement and edges are formed between neighboring regions. 
For all regions in the arrangement that have a nonempty intersection with the line, there is a path in $G$ between two regions that only passes through regions that touch the line.
\end{lemma}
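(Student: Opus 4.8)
The statement is purely combinatorial/geometric: take a line segment $\ell$ with both endpoints in a convex set $C$, and a hyperplane arrangement $\cH$. Since $\ell \subseteq C$ by convexity, what we really want is: the regions of $\cH$ that meet $\ell$ are ``connected along $\ell$'' in the neighborhood graph $G$. My plan is to walk along the segment from one endpoint to the other and read off the sequence of regions that the segment passes through; I will argue that consecutive regions in this sequence are neighbors, so they form the desired path in $G$.

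\textbf{Key steps.} First I would set up coordinates on the segment: parametrize $\ell$ as $\gamma(t) = (1-t)u + tv$ for $t \in [0,1]$, where $u,v$ are the endpoints. For each hyperplane $H = \{x : a^\intercal x = b\}$ in $\cH$, the function $t \mapsto a^\intercal \gamma(t) - b$ is affine in $t$, so it is either identically zero on $[0,1]$, or zero at a single point, or nowhere zero. Collect the finite set $T = \{t_1 < t_2 < \dots < t_m\} \subseteq (0,1)$ of crossing points of those hyperplanes that are crossed transversally (ignoring hyperplanes that contain $\ell$ entirely, since those never separate two regions that both meet $\ell$ — they would make $\ell$ lie in a face, contradicting that a region meets $\ell$ in a nonempty relatively open piece; I'll need to handle this degenerate case carefully, see below). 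On each open subinterval $(t_j, t_{j+1})$, the sign pattern of every hyperplane in $\cH$ is constant, so $\gamma$ stays in a single region $R^{(j)}$; and crossing a single $t_j$ flips exactly the signs of the hyperplanes whose crossing point is $t_j$. If at each $t_j$ only one hyperplane is crossed, then $R^{(j)}$ and $R^{(j+1)}$ differ in exactly one sign, hence are neighbors, and we are done: the sequence $R^{(0)}, R^{(1)}, \dots, R^{(m)}$ is a path in $G$, every node of which meets $\ell$, and its endpoints are the two claimed regions.

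\textbf{Main obstacle.} The genuine difficulty is the non-generic case where several crossing points coincide ($k \geq 2$ hyperplanes are crossed at the same $t_j$), so $R^{(j)}$ and $R^{(j+1)}$ differ in $k$ signs and are not neighbors. The standard fix is perturbation: replace $\ell$ by a nearby segment $\ell_\epsilon$ (e.g. perturb one endpoint, or bow the segment slightly, staying inside $C$ by convexity and by taking $\epsilon$ small) so that all crossing points become distinct and no hyperplane contains the perturbed segment; then apply the generic argument to $\ell_\epsilon$ and argue that the path thus produced only passes through regions meeting the \emph{original} $\ell$ — the regions adjacent to the coincidence point $\gamma(t_j)$ are exactly those whose closure contains $\gamma(t_j) \in \ell$, and for small $\epsilon$ the perturbed segment threads through an appropriate chain of these. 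Making ``threads through'' precise is the crux; one clean way is to note that the union of regions whose closure contains a fixed point $\gamma(t_j)$ is itself the set of regions of a smaller hyperplane arrangement (the sub-arrangement of hyperplanes through that point, translated), so locally near $\gamma(t_j)$ we have reduced to the same statement with a segment $\ell_\epsilon$ through a point common to all hyperplanes, and one can induct on the number of hyperplanes or invoke that any two chambers of a \emph{central} arrangement are connected through chambers — all of which are ``adjacent to'' the common point. I would isolate this as a sub-lemma (connectivity of the chambers meeting a short segment through a point of a central arrangement) and prove it by induction on $|\cH|$, deleting one hyperplane and using that adding back a single hyperplane splits at most the regions it cuts, each split piece remaining adjacent to the neighbor across that hyperplane.

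Finally, I would also dispatch the trivial case where $\ell$ meets only one region (path of length $0$) and the case where an endpoint of $\ell$ lies on some hyperplane (push it slightly inward, again using convexity of $C$), so that the stated path between the ``two regions'' containing the endpoints is well-defined even when an endpoint is on a lower-dimensional face.
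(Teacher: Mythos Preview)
Your proposal is correct, and the route you land on — the sub-lemma that any two regions whose closures contain a common point $\gamma(t_j)$ can be connected through regions whose closures also contain that point, proved by induction on the number of hyperplanes through the point — is exactly the paper's argument. The paper proves that sub-lemma first (same induction: add one hyperplane, it splits a region into two neighbors, connectivity is preserved) and then walks along the segment using it at each crossing point, just as you do.

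The one difference is that you initially reach for perturbation and only arrive at the local-connectivity sub-lemma as a way to make the perturbation rigorous. The paper skips perturbation entirely: once you have the sub-lemma, you do not need to perturb $\ell$ at all, since at a coincidence point $t_j$ the regions on either side of $t_j$ both contain $\gamma(t_j)$ in their closures and are therefore already connected through regions touching $\gamma(t_j) \in \ell$. So you can drop the $\ell_\epsilon$ machinery and the endpoint-pushing; the direct argument is shorter and avoids the ``threads through'' worry you flagged.
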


\begin{proof}
We will first show that any two regions that share a common point can be connected through regions that share that same point $v$. 
Suppose this is true if there are $k$ hyperplanes going through the point $v$. Let $A$ and $A'$ be neighboring regions that contain $v$. 
After introducing another hyperplane through $v$, suppose $A$ gets split into two neighboring regions $A^1,A^2$. 
The addition of the hyperplane either splits $A'$ into two regions or keeps it as one. Let $A^3$ be one of the nonempty regions. 
It must be the case that $A^3$ is on the same side of the hyperplane as one of $A^1,A^2$, so $A^3$ is a neighbor to one of them. Hence, all the regions around the point remain connected to each other through each other.

We will now show we can move from one end of the line to the other. 
The hyperplane arrangement partitions the line into line segments $\{l_1,l_2,\dotsc,l_k\}$. 
There is a region that covers each line segment, so we can connect any two such line segment covering regions through a path. 
Furthermore, any region touching the line must touch one of the endpoints of the line segment, and hence there is a path between that point and the corresponding line segment covering region.
\end{proof}

\begin{proof}[Proof of Proposition~\ref{prop:increasing_sequence}]
Let $v$,$v'$ be the minimizers of $\dist(\cdot,x)$ when restricted to $R_{A} \cap C$ and $R_{A'} \cap C$ respectively. 
If $\dist(R_A\cap C,x) = \dist(R_{A'} \cap C)$, then the straight line segment connecting $v$ and $v'$ only passes through patterns with the same distance by convexity of the distance function and the fact that $A$ contains a minimizer of $\dist(\cdot, x)$ in $C$.

Suppose the set $C$ and corresponding pattern $A$ are fixed.
We now prove this for all valid $A'$ patterns by performing induction on the distance $\dist(R_{A'} \cap C, x)$ since there are only finitely many. 
Suppose this is true for all patterns of distance less than some $d$. 
Let $A'$ be a pattern where $R_{A'} \cap C$ has next lowest distance $d'$ from $x$.
Again consider the straight line segment connecting $v$ and $v'$.
All regions touching $v'$ have distance at most $d'$. 
Furthermore, since the number of regions is finite, by the convexity of the line segment there must be some pattern $B$ where $v' \in R_B$ and $R_B$ also contains other points on the line segment closer to $v$. The convexity of the distance function implies that $\dist(R_B \cap C,x) < d'$.
By Lemma~\ref{lem:line} there must be a series of neighboring regions connecting $B$ to $A'$, and as a result $A'$ is connected to a region of at most distance $d$ through regions of at most distance $d'$. 
\end{proof}

\subsection{Correctness Proof}

We return to considering partial activation patterns and Theorem~\ref{thm:correctness}, which claims that LayerCert-Basic returns the distance of the closest point with a different label from the input.
Recall the \emph{hierarchical search graph} described in Section~\ref{subsec:deep_geo}: 
\begin{itemize}
  \item Root node (level $0$): an empty `activation pattern' corresponding to the `activation region' $\bR^n$.
  \item Nodes at level $l$: the nonempty layer-$l$ partial activation region.
  \item Edges at level $l$: between $A$ and $A'$ if they are $l$-layer neighbors (i.e.\ $A$ and $A'$ differ on a single $l$th layer neuron).
  \item Edges between levels $l$ and $l+1$: between $\parent(A)$ and $A$ if $A$ is obtained by $\nextlayer(A,v)$ during the GeoCert algorithm.
\end{itemize}
In each iteration of the outer loop of LayerCert-Basic, the algorithm considers all nodes adjacent to those that have been processed. It picks a closest unprocessed node and computes the distance to all the neighboring nodes.

\begin{lemma}\label{lem:monotonic_paths}
Let $H$ denote the hierarchical search graph of LayerCert-Basic for some network and input point $x$. 
There is a path in $H$ from the initial layer-1 activation pattern $A^x$ to every pattern $A$ such that the distances of the patterns on the path are monotonically increasing.
\end{lemma}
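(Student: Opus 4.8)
The plan is to build the monotone path to an arbitrary pattern $A$ by induction on its level $l$, using Proposition~\ref{prop:increasing_sequence} within each level to handle the intra-level (neighboring-sibling) edges and the structure of the hierarchical search graph to handle the inter-level edges. First I would set up the induction: the base case $l=1$ is the hyperplane arrangement induced by the first layer, and $A^x$ is by construction the pattern containing a minimizer of $\dist(\cdot,x)$ over all of $\bR^n$ (it contains $x$ itself, so its distance is $0$). So Proposition~\ref{prop:increasing_sequence} with $C=\bR^n$ gives, for any layer-1 pattern $A$, a sequence of neighboring layer-1 patterns from $A^x$ to $A$ with monotonically increasing distances and all regions nonempty — which is exactly a monotone path in $H$ restricted to level $1$.

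For the inductive step, suppose the claim holds for level $l$, and let $A$ be a nonempty layer-$(l+1)$ pattern with parent $P \coloneqq \parent(A)$. By the induction hypothesis there is a monotone path in $H$ from $A^x$ to $P$. Now I need to reach level $l+1$: among the siblings of $A$ (the children of $P$), let $A^{\text{near}}$ be the one whose region is closest to $x$ — by the definition of the hierarchical search graph, $P$ is connected to $A^{\text{near}}$ via an inter-level edge, and since $R_{A^{\text{near}}} \subseteq R_P$ the distances satisfy $\dist(R_P,x) \le \dist(R_{A^{\text{near}}},x)$, so appending this edge keeps the path monotone. Finally, within level $l+1$ I apply Proposition~\ref{prop:increasing_sequence} with $C = R_P$: the children of $P$ are exactly the regions of the hyperplane arrangement that the $(l+1)$-th layer induces inside $R_P$, the pattern $A^{\text{near}}$ contains a minimizer of $\dist(\cdot,x)$ over $C$ by construction, and $R_A \cap C = R_A$ is nonempty. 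The proposition then yields a sequence of neighboring-sibling patterns from $A^{\text{near}}$ to $A$ with monotonically increasing distances, each intersected with $C$ being nonempty — i.e.\ a monotone path in level $l+1$ of $H$ from $A^{\text{near}}$ to $A$. Concatenating the three pieces (monotone path to $P$, the inter-level edge to $A^{\text{near}}$, the intra-level path to $A$) gives the desired monotone path to $A$.

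The main subtlety — and the step I would be most careful about — is the interface between Proposition~\ref{prop:increasing_sequence} and the notion of "neighboring sibling" used to define level-$l$ edges in $H$. Proposition~\ref{prop:increasing_sequence} is stated for a hyperplane arrangement restricted to a convex set $C$ and produces \emph{neighboring patterns} of that arrangement; I must check that "neighboring in the $(l+1)$-th layer arrangement inside $R_P$" coincides with "neighboring siblings" in the sense of Definition~\ref{def:partial_neighbors}, so that the edges it produces are genuinely edges of $H$. This is essentially immediate from the fact that two children of $P$ differ exactly on a single $(l+1)$-th layer neuron precisely when their patterns in the layer-$(l+1)$ arrangement differ on a single hyperplane, but it deserves an explicit sentence. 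A secondary point is that the distances compared in the monotone path are $\dist(R_A, x)$ for full/partial regions, whereas Proposition~\ref{prop:increasing_sequence} compares $\dist(R_{A'} \cap C, x)$; since $R_A = R_{\parent(A)} \cap (\text{its slice})$ and the children's regions are literally subsets of $R_P$, these coincide, but one should note that $\dist(R_P, x) \le \dist(R_{A^{\text{near}}}, x)$ follows from $R_{A^{\text{near}}} \subseteq R_P$ and nothing more. Everything else is bookkeeping in assembling the concatenated path.
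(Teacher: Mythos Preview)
Your proposal is correct and follows essentially the same route as the paper: induction on the layer index, with the inter-level edge handled by the $\nextlayer$ child and the intra-level portion handled by Proposition~\ref{prop:increasing_sequence} applied to the hyperplane arrangement inside $C = R_{\parent(A)}$. The only small sharpening the paper makes is to note that the inter-level edge preserves distance \emph{exactly} (since $\nextlayer$ returns the child containing the projection point $v$, so $\dist(R_{A^{\text{near}}},x)=\dist(R_P,x)$), which immediately gives the ``contains a minimizer'' hypothesis you need for Proposition~\ref{prop:increasing_sequence}; you instead derive $\le$ from containment and then assert the minimizer property ``by construction,'' which is fine but slightly less direct.
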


\begin{proof}
Suppose the main claim holds for all patterns up to layer $l-1$. We will now show this for layer $l$. 
The claim holds for any pattern obtained through the $\nextlayer$ operation since the distance of this pattern is equal to that of its parent. 
Given an arbitrary $l$-layer region $R_A$ of distance $d$ from $x$, as a consequence of Proposition~\ref{prop:increasing_sequence} there is a path in $H$ of monotonically increasing distances from the pattern $B$ obtained by applying $\nextlayer$ to $\parent(A)$. 
\end{proof}

The proof of correctness follows from Lemma~\ref{lem:monotonic_paths}.

\begin{proof}[Proof of Theorem~\ref{thm:correctness}]
Since the distance to the distance boundary contained within a full region is further or equal to the distance of the region, if we process all the regions in order of distance we will terminate only when we have computed all regions closer than the closest decision boundary.

The first region processed is the region associated the empty activation pattern which has distance $0$ from the input point. 
Now suppose that we have popped all patterns of distance $< d$ from the priority queue and the next closest un-popped region is of distance $d$. 
We will now show that the priority queue contains a region of distance $d$ before the next iteration of the outer loop and that we have correctly computed the distance to this region.

Pick an unpopped activation pattern $A$ of distance $d$ of the shortest path length (in terms of number of edges in $H$) to the root node. 
If $A$ is directly connected to its parent $\text{parent}(A)$ in $H$, then we must have popped and processed $\text{parent}(A)$ in some iteration of the algorithm since that has a smaller path length to the root node in $H$.
This means that $A$ is in the priority queue. 
If this is not the case, then there is some sibling $B$ that is connected to $\text{parent(A)}$. 
By Lemma~\ref{lem:one_cut_eq}, we know that 
$$\dist(R_B,x) = \dist(R_{\text{parent}(A)},x) \leq \dist(R_A,x).$$ 
Proposition~\ref{prop:increasing_sequence} tells us that there is a path in $H$ between $B$ and $A$ of patterns with monotonically increasing distance. 
Pick the first unpopped pattern $A'$ on this path that is of distance $d$ from $x$. 
$A'$ must have a popped pattern next to it, and as a result $A'$ must be on the priority queue.
\end{proof}

\subsection{Main Proof}

\begin{proof}[Proof of Theorem~\ref{thm:main}]
From Theorem~\ref{thm:correctness}, we know that LayerCert-Basic processes the full activation patterns in order of increasing distance. GeoCert does the same, and therefore both methods will process the same full activation patterns. 

We will first show that the only nodes LayerCert processes are those that are full activation patterns or ancestors of the full activation patterns processed. 
Suppose for contradiction that this is not the case. Then, there is some processed pattern $A$ that is not a full activation pattern where we process no child. 
However, LayerCert will apply $\nextlayer$ to $A$, adding the child node $B$ with the same priority as $A$ to the priority queue. 
By assumption we have $U$ larger than the distance of $B$, so we will process $B$ eventually, contradicting our earlier claim.

For any full activation pattern $A$, there is a one-to-one mapping between the convex programs processed by GeoCert and the convex programs processed LayerCert for $A$ \emph{and all ancestors of} $A$. This is because we form one program for each neuron in the ReLU network. Consider the $j$-th neuron in the $l$-th layer. 
The constraints in the convex program formed by GeoCert for $A$ and a neighboring full pattern $A'$ are
\begin{align*} 
  A_i(j) z^i_j \geq 0 \text{ for } i \in [L], j \in [n_i], \text{ and } 
  z^{l}_{j'} = 0
\end{align*}
where $z^i_j$ is defined as in \eqref{eq:z}. In contrast, LayerCert uses 
\begin{align*} 
  A_i(j) z^i_j \geq 0 \text{ for } i \in [l], j \in [n_i], \text{ and } 
  z^{l}_{j'} = 0
\end{align*}
which only has a subset of the constraints. Since the only nodes we have to consider for LayerCert are ancestors of full activation patterns, this concludes the proof of the theorem.
\end{proof}

\section{Simplifying Projection Problems via Upper Bounds} \label{sec:reducing constraints}

To reduce the number of constraints in Problems \eqref{eq:decision} and Problems~\eqref{eq:priority}, \citet{jordan_provable_2019} note that we can quickly decide if a constraint of the form $a^\intercal x \leq b$ is necessary by checking if the hyperplane $\{x \:|\: a^\intercal x = b\}$ overlaps with the ball $B_{p,U}(x)$. 
This can be done via performing an $\ell_p$ projection of $x$ onto the hyperplane given by $a^\intercal x = b$, which can be done in a linear number of elementary operations.
This technique of using upper and lower bounds on variables (corresponding to the case where $p=\infty$ to prune constraints is standard in the linear programming literature (see for example \citet{gondzio_presolve_1997}).
Each time a tighter upper bound $U$ is obtained, the ball $B_{p,U}(x)$ shrinks, allowing us to remove more constraints.

We can apply this technique to GeoCert and any variant of the LayerCert method. Under the additional assumption that GeoCert and LayerCert-Basic process full activation patterns in the same order, LayerCert-Basic maintains its advantage in number and size of convex programs solved.

\begin{theorem}\label{thm:main_prune} (Complexity of LayerCert-Basic with Domain-based Constraint Pruning)
Suppose LayerCert-Basic and GeoCert process full activation patterns in the same order. 
Suppose we formulate the convex problems associated with $\decision$ and $\nextlayer$ using Formulations~\eqref{eq:region_poly} and \eqref{eq:rface_poly}. 
We can construct an injective mapping from the set of convex programs solved by LayerCert to the corresponding set in GeoCert such that the constraints in the LayerCert program is a subset of those in the corresponding GeoCert program.
\end{theorem}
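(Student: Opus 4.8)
The plan is to re-run the argument behind Theorem~\ref{thm:main} and then push the domain-based pruning through it; the new hypothesis that both methods process full activation patterns in the same order replaces the role that Theorem~\ref{thm:correctness} played in the unpruned setting, where that order was \emph{derived} rather than assumed. First I would re-establish the two structural facts from the proof of Theorem~\ref{thm:main}: (i) every node LayerCert processes is a full activation pattern or an ancestor of one it processes, since whenever it pops a partial pattern $A$ it immediately pushes $\nextlayer(A,v)$ with the same priority, so that child is eventually processed unless $U$ already dominates its distance; and (ii) for each processed full pattern $A=(A_1,\dots,A_L)$, the $\priority$ programs GeoCert solves for $A$'s neighbours and the $\priority$ programs LayerCert solves along the ancestor chain $B^1,\dots,B^{L-1},A$ (with $B^\ell=(A_1,\dots,A_\ell)$) are both indexed by which neuron $(\ell,j)$ is flipped. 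Mapping LayerCert's program at the level-$\ell$ sibling edge through $(\ell,j)$, whose feasible set is $R_{B^\ell}\cap\{z^\ell_j=0\}$ and whose formulation uses the layer-$1,\dots,\ell$ inequalities of $B^\ell$ together with $z^\ell_j=0$, to GeoCert's program at a full-pattern edge through the same neuron extending it, whose feasible set is $R_A\cap\{z^\ell_j=0\}$ and whose formulation uses all layer-$1,\dots,L$ inequalities of $A$ together with $z^\ell_j=0$, gives a constraint-list inclusion because $R_A\subseteq R_{B^\ell}$ and $A,B^\ell$ agree on layers $1,\dots,\ell$; choosing one extending full pattern per shared ancestor makes the map injective, and the $\decision$ programs are matched by the identity since both algorithms solve the same one per full pattern.

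Next I would carry the pruning through. The pruning retains an inequality $a^\intercal v\le b$ only when its hyperplane meets $B_{p,U}(x)$ for the value of $U$ current at the moment that program is solved. Since both algorithms update $U$ only at full patterns and process them in the same order, $U$ follows a common schedule $U^{(k)}=\min(U_0,\decision(P^{(1)}),\dots,\decision(P^{(k)}))$ after $k$ full patterns have been processed; in particular each full pattern is reached by both algorithms at the same distance and hence with the same $U$, so the $\decision$ programs and the level-$L$ $\priority$ programs are pruned identically (resp.\ the listing of the latter still nests). For the level-$\ell$ programs with $\ell<L$ one must compare the pruning ball LayerCert uses for $\pi_L$ against the ball GeoCert uses for the matched $\pi_G$: if the former is contained in the latter, every inequality surviving pruning in $\pi_L$ lies among the sub-list of $\pi_G$'s inequalities that also survive there, preserving the conclusion.

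The step I expect to be the main obstacle is exactly that ball comparison. GeoCert solves $\pi_G$ for a full-pattern edge $\{A,A'\}$ at the ``time'' $\min(\dist(R_A,x),\dist(R_{A'},x))$, whereas LayerCert solves $\pi_L$ for the ancestor edge $\{B^\ell,C^\ell\}$ at the not-later time $\min(\dist(R_{B^\ell},x),\dist(R_{C^\ell},x))$ (since $R_A\subseteq R_{B^\ell}$, etc.), so a naive matching makes GeoCert's $U$ the tighter of the two and GeoCert prunes \emph{more}, which would break the inclusion. The intended fix is to use the freedom in the injective map: match each ancestor edge to the extending full-pattern edge whose two regions are \emph{closest} to $x$, using the identity $\dist(R_{B^\ell},x)=\min_{A\text{ a full descendant of }B^\ell}\dist(R_A,x)$ so that the matched $\pi_G$ is processed no later than $\pi_L$. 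One then has to check that such an extension can be chosen simultaneously for $B^\ell$ and $C^\ell$ — essentially that a minimiser of $\dist(\cdot,x)$ over the partial region sits in a full subregion that still borders the sibling partial region across $\{z^\ell_j=0\}$ — and that these local choices assemble into a single injective map; this compatibility is the technical heart of the proof and is the natural place to invoke a general-position assumption in the spirit of the hypothesis of Theorem~\ref{thm:main}.
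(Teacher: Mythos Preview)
Your proposal is substantially more careful than the paper's own argument, which consists of two sentences: the proof ``is similar to the proof of Theorem~\ref{thm:main}'' and ``the addition of domain-based pruning does not affect the result since the same constraints are pruned for both methods.'' The paper treats the same-order hypothesis as immediately synchronising the pruning and does not articulate, let alone resolve, the timing issue you raise.

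Your decomposition is sound. The $\decision$ programs and the level-$L$ $\priority$ programs are indeed solved with the same current $U$ in both algorithms, since both are triggered when a full pattern is popped and the same-order hypothesis pins the $U$-schedule at those moments; for these the paper's one-line claim is justified, and you recover the subset inclusion directly. The genuine content is the level-$\ell<L$ $\priority$ programs, where you correctly observe that under the naive mapping inherited from Theorem~\ref{thm:main} (ancestor edge $\mapsto$ descendant edge at the same neuron), LayerCert solves $\pi_L$ no later than GeoCert solves $\pi_G$, so GeoCert's $U$ is at least as tight and GeoCert may prune \emph{strictly more} shared constraints, which breaks the inclusion. The paper simply does not engage with this; read literally, its sketch is incomplete at exactly the point you flag.

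Your proposed repair --- redirect the injection so that each ancestor edge is matched to a full-descendant edge that GeoCert solves \emph{no later} than LayerCert solves the ancestor edge, using $\dist(R_{B^\ell},x)=\min_{A}\dist(R_A,x)$ over full descendants --- is the natural move and gives the needed ball inclusion $B_{p,U_L}(x)\subseteq B_{p,U_G}(x)$. The residual checks you list are real: one must ensure the chosen closest full descendant $A$ of $B^\ell$ actually has a nonempty face across neuron $(\ell,j)$ (it need not, since the minimiser of $\dist(\cdot,x)$ over $R_{B^\ell}$ may sit in a full subregion not touching $\{z^\ell_j=0\}$), and that the per-edge choices assemble injectively. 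These are exactly the places where a general-position hypothesis, of the kind already present in Theorem~\ref{thm:main}, would be invoked. In summary, you have identified the only nontrivial step and outlined a plausible route through it; the paper's own proof takes this step for granted.
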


The proof of this is similar to the proof of Theorem~\ref{thm:main}. 
The addition of domain-based pruning does not affect the result since the same constraints are pruned for both methods.

\section{LayerCert Framework and Convex Restrictions}

We expand on the discussion on the use in Section~\ref{sec:full} on how to use incomplete verifiers such as those presented in \citet{weng_towards_2018,zhang_efficient_2018} or the full linear programming relaxation of single ReLUs to improve LayerCert. 
Once we compute a convex set $M$ that contains the closest decision boundary, we can subsequently restrict our attention to just the intersection of activation regions with $M$. 
The correctness of this follows from the fact that Proposition~\ref{prop:increasing_sequence} accommodates the use of such a set $M$ and from modifying the hierarchical search graph in Lemma~\ref{lem:monotonic_paths}.

Instead of just applying $\restrict$ at the initiation iteration, one can apply it at every iteration. 
This version of $\restrict$ also takes in an activation pattern and forms the restriction based on this. 
We describe the full algorithm in Algorithm~\ref{alg:layercert-recursive}.
The use of $\restrict$ in every iteration can add significant overhead to the algorithm, so there is a trade-off between (1) the choice of $\restrict$ used and (2) the frequency in which $\restrict$ is applied, and these choices can depend heavily on the width and depth of the neural networks that we are verifying. 
We leave a computational study of this to future work. 

\begin{algorithm}
   \caption{LayerCert with Recursive Convex Restrictions}
   \label{alg:layercert-recursive}
\begin{algorithmic}[1]
  \STATE {\bfseries Input:} $x$, $y$ (label of $x$), $U$ (upper bound)
  \STATE $d,A,M \gets \restrict(\emptyset,x,ub)$
  \STATE $Q \gets $ empty priority queue
  \STATE $Q.\push((d,A,M))$
  \STATE $S \gets \emptyset$
  \WHILE{$Q \neq \emptyset$}
    \STATE $(d, A, M) \gets Q.\pop()$
    \IF{$ub \leq d$}
      \STATE \textbf{return} $U$
    \ENDIF
    \IF{$A$ is a full activation pattern}
      \STATE $ub \gets \min(\decision(A, x),ub)$
    \ELSE
      \IF{$\reachable(A,x,ub) =$ `maybe'}
          \STATE $d',A',M' \gets \restrict(A,x,ub)$
          \STATE $Q.\push(d',A',M')$
      \ENDIF
    \ENDIF
    \FOR{$A' \in N_{\text{current\_layer}}(A)\setminus S$}
      \IF{$\Face(A,A') \cap M$ is nonempty}
        \STATE $d' \gets \priority(x, \Face(A,A') \cap M)$ 
        \STATE $Q.\push((d',A',M))$
        \STATE $S \gets S \cup \{ A' \}$
      \ENDIF
    \ENDFOR
  \ENDWHILE
\end{algorithmic}
\end{algorithm}

\section{Additional Experiments and Details}


\subsection{Parameter Choices}

\paragraph{Convex programming software. }
We used the following package versions and settings for handling convex programs. All the solvers are accessed through CVXPY's interface:
\begin{itemize}
  \item Modeling language: CVXPY v1.0.25 \citep{cvxpy}.
  \item Primary solver: ECOS v2.0.7  \citep{ecos} --
    `abstol':$1 \times 10^-5$,
    `reltol':$1 \times 10^-4$,
    `feastol':$1 \times 10^-5$,
    `abstol\_inacc':$5 \times 10^-3$,
    `reltol\_inacc':$5 \times 10^-2$,
    `feastol\_inacc':$5 \times 10^-3$.
  \item Secondary solver: OSQP v0.6.0 \citep{osqp} --
    `eps\_abs':$0.001$, 
    `eps\_rel':$0.001$.
  \item Backup solver: SCS v2.1.1 \citep{o2016conic} --
    default settings.
\end{itemize}

\paragraph{Use of constraint-reducing techniques.}

We used the same domain-based techniques mentioned in Section~\ref{sec:reducing constraints} for all the methods.
Before adding a constraint to a convex program to solve, we check if the constraint overlaps with $B_{p,U}(x)$.

\subsection{Additional Results}

\paragraph{$\ell_2$-norm experiments.}

In Table~\ref{tb:results_l2} we present the averaged results for $\ell_2$-norm. We used an upper bound radius of 3.0. As with the $\ell_\infty$ results presented in Table~\ref{tb:results}, the LayerCert methods consistently outperform the GeoCert methods in terms of the timing. We note that some of the difference in running times is partially due to the solvers -- ECOS and OSQP failed frequently, and whenever SCS is invoked the running time is significantly increased. This issue persisted over a range of hyperparameter settings for ECOS and OSQP. We note that this is rare for the $\ell_\infty$-norm experiments.

As for the number of quadratic programs, LayerCert-CROWN and LayerCert-Both consistently outperform both GeoCert methods, while LayerCert and LayerCert-IA mostly but not always outperforms GeoCert-Lip. GeoCert is consistently the worst performing method in terms of number of QPs and running time.

{
\begin{table*}
\begin{center}
\includegraphics[width=1.0\linewidth]{./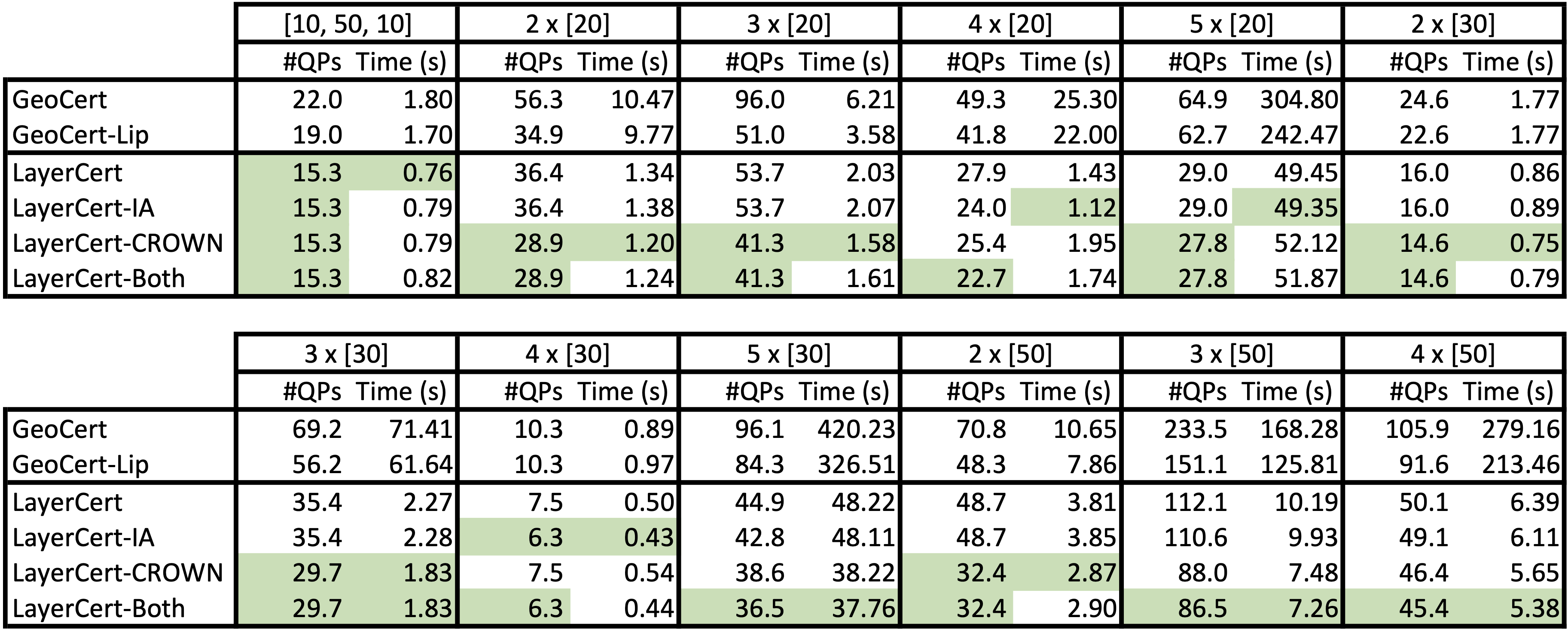}
\vspace{-5pt}
\caption{
Average number of convex programs and running time over 100 inputs for 12 different networks for $\ell_2$-norm distance.
The green shaded entries indicate the best performing method for each neural network under each metric.
The gray shaded entries indicates the algorithm timed out before an exact solution to Problem~(\ref{eq:robustness}) could be found or before a radius of 3 is reached for \emph{at least one input}.
Whenever a timeout occurs, we use a time of 1800 seconds in its place, which leads to an underestimate of the true time.
\label{tb:results_l2}}
\end{center}
\end{table*}
}

\paragraph{Ablation Study for LayerCert-Basic.}

In Section~\ref{subsec:layercert-basic}, one of the reasons provided for the reduction in the number of $\priority$ computations for LayerCert-Basic over GeoCert was that was LayerCert marked a pattern $A$ as `seen' after the first $\priority$ computation involving $A$, whereas GeoCert only did so after it selected $A$ at the start of an iteration. To test the effect of this particular change, we created a new variant of LayerCert where we only marked a pattern as seen after it is chosen at the start of a main iteration, akin to GeoCert. 

The results are presented in Table~\ref{tb:ablation}. We see that this leads to a slight overhead over LayerCert, but overall this accounts only for a small fraction of the improvement in performance over GeoCert. This indicates that the majority of the improvement comes from hierarchical structure and the way that it allows us to avoid a large number of $\priority$ computations while reducing the complexity of each computation.

{
\begin{table*}
\begin{center}
\includegraphics[width=1.0\linewidth]{./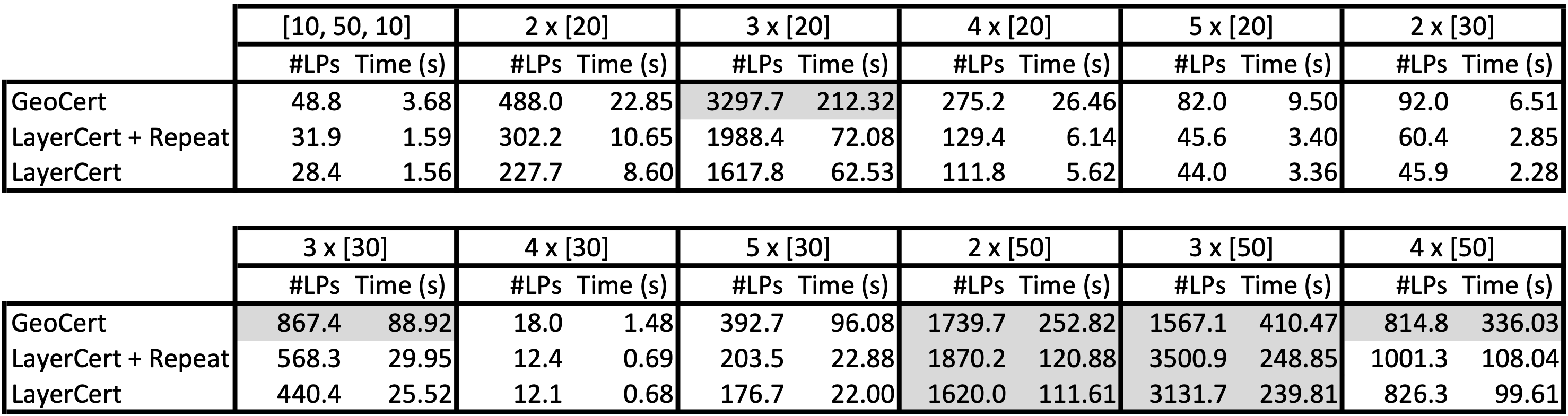}
\vspace{-5pt}
\caption{Ablation Study:
Average number of convex programs and running time over 100 inputs for 12 different networks for $\ell_\infty$-norm distance, focusing on GeoCert, LayerCert-Basic, and a variant of LayerCert that allows repeated $\priority$ computations by shifting when we mark a pattern as `seen'.
The gray shaded entries indicates the algorithm timed out before an exact solution to Problem~(\ref{eq:robustness}) could be found or before a radius of 0.3 is reached for \emph{at least one input}.
Whenever a timeout occurs, we use a time of 1800 seconds in its place, which leads to an underestimate of the true time.
\label{tb:ablation}}
\end{center}
\end{table*}
}

\paragraph{Experiments for Larger Networks}

To compare the performance of LayerCert and GeoCert for large networks, we chose the following fully-connected networks and used $\ell_\infty$-distance.
\begin{itemize}
\item $8\times[20]$,
\item $6\times[30]$,
\item $5\times[50]$,
\item $5\times[60]$.
\end{itemize}
We chose the 10 instances where none of the methods were able to converge within 1800 seconds and compared the performance of GeoCert with Lipschitz term and LayerCert with both heuristics. Figures \ref{fig:il-820}, \ref{fig:il-630}, \ref{fig:il-550}, \ref{fig:il-560} contain these results. The orange solid line in each represent LayerCert while the blue dotted line represents GeoCert.

{
\begin{figure}
\begin{center}
\includegraphics[width=0.16\linewidth]{./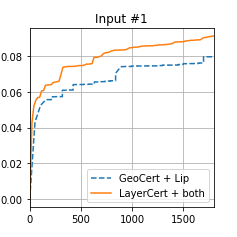}
\includegraphics[width=0.16\linewidth]{./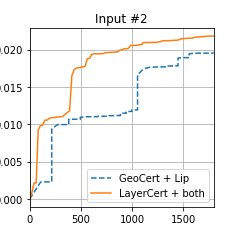}
\includegraphics[width=0.16\linewidth]{./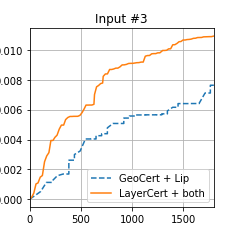}
\includegraphics[width=0.16\linewidth]{./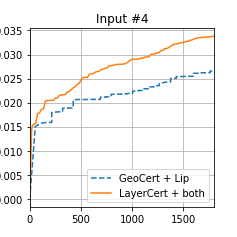}
\includegraphics[width=0.16\linewidth]{./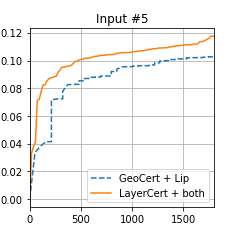}
\\
\vspace{-5pt}
\includegraphics[width=0.16\linewidth]{./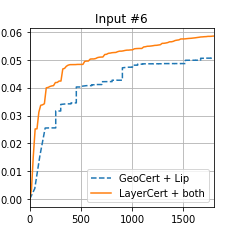}
\includegraphics[width=0.16\linewidth]{./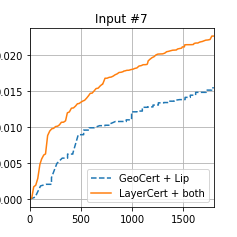}
\includegraphics[width=0.16\linewidth]{./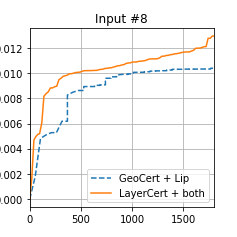}
\includegraphics[width=0.16\linewidth]{./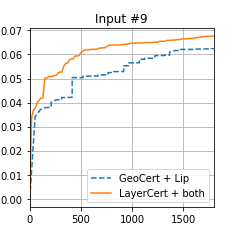}
\includegraphics[width=0.16\linewidth]{./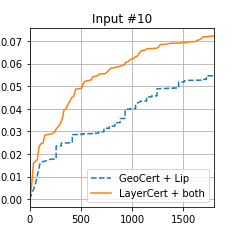}
\\
\vspace{-5pt}
\caption{Timing for 10 Instances, $8\times[20]$ network. \label{fig:il-820}}
\end{center}
\end{figure}
}

{
\begin{figure}
\begin{center}
\includegraphics[width=0.16\linewidth]{./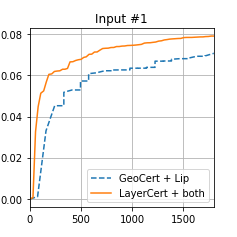}
\includegraphics[width=0.16\linewidth]{./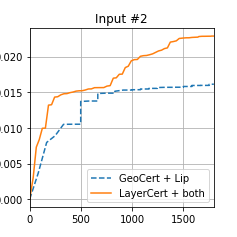}
\includegraphics[width=0.16\linewidth]{./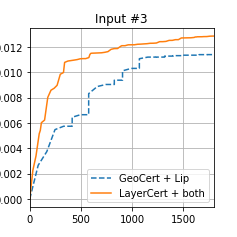}
\includegraphics[width=0.16\linewidth]{./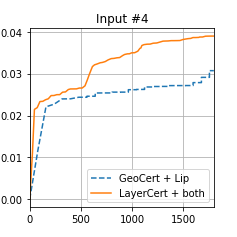}
\includegraphics[width=0.16\linewidth]{./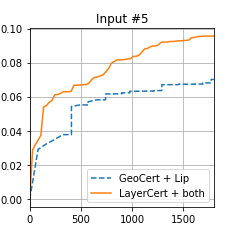}
\\
\vspace{-5pt}
\includegraphics[width=0.16\linewidth]{./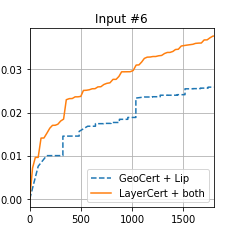}
\includegraphics[width=0.16\linewidth]{./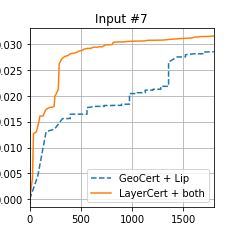}
\includegraphics[width=0.16\linewidth]{./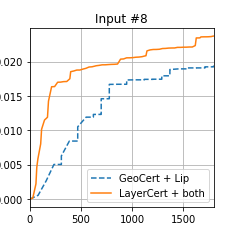}
\includegraphics[width=0.16\linewidth]{./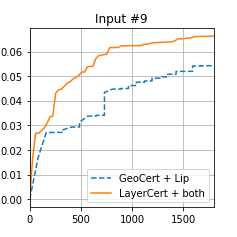}
\includegraphics[width=0.16\linewidth]{./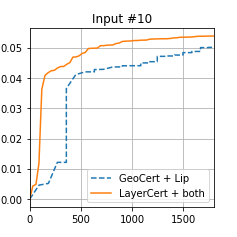}
\\
\vspace{-5pt}
\caption{Timing for 10 Instances, $6\times[30]$ network. \label{fig:il-630}}
\end{center}
\end{figure}
}
{
\begin{figure}
\begin{center}
\includegraphics[width=0.16\linewidth]{./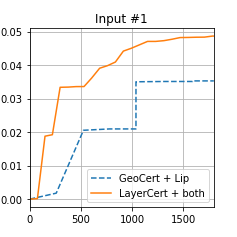}
\includegraphics[width=0.16\linewidth]{./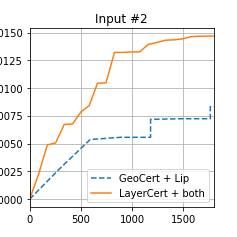}
\includegraphics[width=0.16\linewidth]{./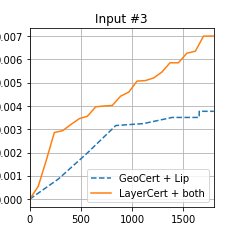}
\includegraphics[width=0.16\linewidth]{./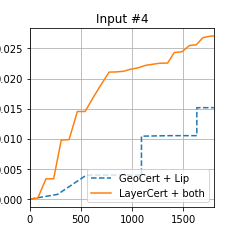}
\includegraphics[width=0.16\linewidth]{./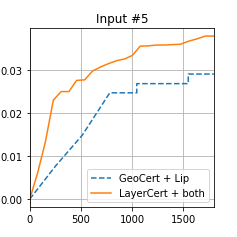}
\\
\vspace{-5pt}
\includegraphics[width=0.16\linewidth]{./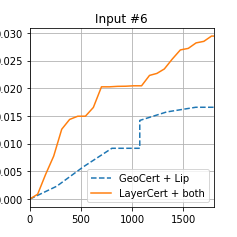}
\includegraphics[width=0.16\linewidth]{./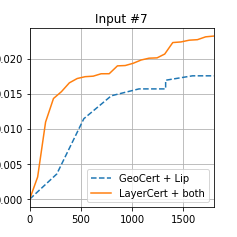}
\includegraphics[width=0.16\linewidth]{./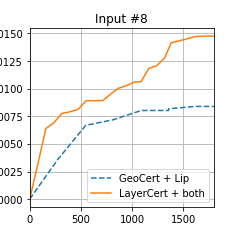}
\includegraphics[width=0.16\linewidth]{./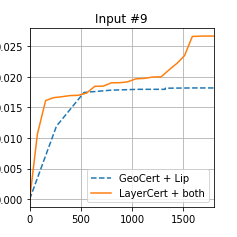}
\includegraphics[width=0.16\linewidth]{./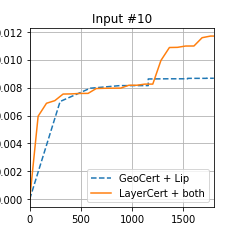}
\\
\vspace{-5pt}
\caption{Timing for 10 Instances, $5\times[50]$ network. \label{fig:il-550}}
\end{center}
\end{figure}
}
{
\begin{figure}
\begin{center}
\includegraphics[width=0.16\linewidth]{./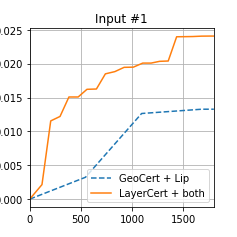}
\includegraphics[width=0.16\linewidth]{./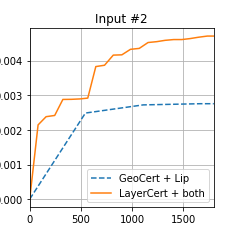}
\includegraphics[width=0.16\linewidth]{./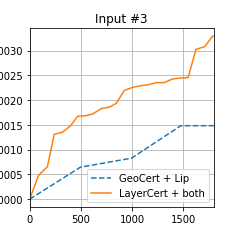}
\includegraphics[width=0.16\linewidth]{./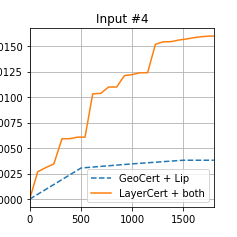}
\includegraphics[width=0.16\linewidth]{./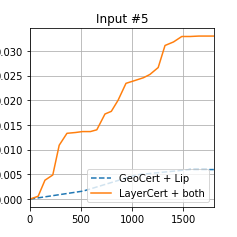}
\\
\vspace{-5pt}
\includegraphics[width=0.16\linewidth]{./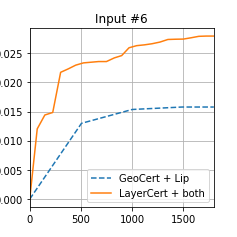}
\includegraphics[width=0.16\linewidth]{./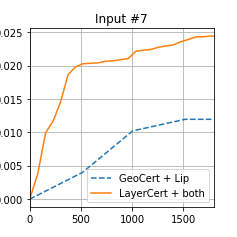}
\includegraphics[width=0.16\linewidth]{./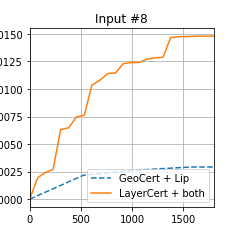}
\includegraphics[width=0.16\linewidth]{./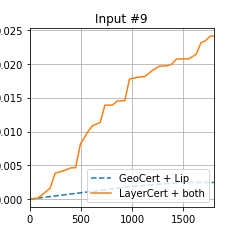}
\includegraphics[width=0.16\linewidth]{./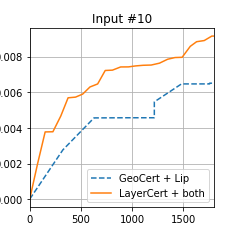}
\\
\vspace{-5pt}
\caption{Timing for 10 Instances, $5\times[60]$ network. \label{fig:il-560}}
\end{center}
\end{figure}
}

\end{document}